\title{On Characterizing the Trade-off in \\Invariant Representation Learning}
\author{\name Bashir Sadeghi\email sadeghib@msu.edu \\
      \name Sepehr Dehdashtian \email sepehr@msu.edu \\
      \name Vishnu Naresh Boddeti\email vishnu@msu.edu \\
      \addr Department of Computer Science and Engineering\\
      Michigan State University
      }
\theoremstyle{definition}
\newtheorem{definition}{Definition}
\newtheorem{theorem}{Theorem}
\newtheorem{corollary}{Corollary}[theorem]
\newtheorem{lemma}[theorem]{Lemma}
\newtheorem{assumption}{Assumption}
\newtheorem{remark}{Remark}
\newtheorem{theorem1}{Theorem}
\newtheorem*{theorem2}{Theorem}
\newtheorem{lemma1}[theorem1]{Lemma}
\newtheorem{corollary1}{Corollary}[theorem1]
\newcommand{\nn}{\nonumber}
\newcommand{\cov}{\mathbb{C}\text{ov}}
\newcommand{\var}{\mathbb{V}\text{ar}}
\newcommand{\indep}{\perp\!\!\!\perp}
\DeclareMathOperator*{\argmin}{arg\,min}
\begin{document}

\maketitle

\begin{abstract}
Many applications of representation learning, such as privacy preservation, algorithmic fairness, and domain adaptation, desire explicit control over semantic information being discarded. This goal is formulated as satisfying two objectives: maximizing utility for predicting a target attribute while simultaneously being invariant (independent) to a known semantic attribute. Solutions to invariant representation learning (IRepL) problems lead to a trade-off between utility and invariance when they are competing. While existing works study bounds on this trade-off, two questions remain outstanding: 1) \emph{What is the exact trade-off between utility and invariance?} and 2) \emph{What are the encoders (mapping the data to a representation) that achieve the trade-off, and how can we estimate it from training data?} This paper addresses these questions for IRepLs in reproducing kernel Hilbert spaces (RKHS)s. Under the assumption that the distribution of a low-dimensional projection of high-dimensional data is approximately normal, we derive a closed-form solution for the global optima of the underlying optimization problem for encoders in RKHSs. This yields closed formulae for a near-optimal trade-off, corresponding optimal representation dimensionality, and the corresponding encoder(s). We also numerically quantify the trade-off on representative problems and compare them to those achieved by baseline IRepL algorithms. Code is available at \url{https://github.com/human-analysis/tradeoff-invariant-representation-learning}.
\end{abstract}
\section{Introduction}

Real-world applications of representation learning often have to contend with objectives beyond predictive performance. These include cost functions corresponding to invariance (e.g., to photometric or geometric variations), semantic independence (e.g., to age or race for face recognition systems), privacy (e.g., mitigating leakage of sensitive information \citep{roy2019mitigating}), algorithmic fairness (e.g., demographic parity \citep{madras2018learning}), and generalization across multiple domains \citep{ganin2016domain}, to name a few.

At its core, the goal of the aforementioned formulations of representation learning is to satisfy two competing objectives: Extracting as much information necessary to predict a target label $Y$ (e.g., face identity) while \emph{intentionally} and \emph{permanently} suppressing information about a given semantic attribute $S$ (e.g., age or gender). See Figure~\ref{fig:overview} (a) for illustration. Let $Z$ be the encoding of the input data from which the target attribute $Y$ can be predicted. When the statistical dependency between $Y$ and $S$ is not negligible, learning a representation $Z$ that is invariant to the semantic attribute $S$ (i.e., $Z\indep S$) will necessarily degrade the performance of the target prediction, i.e., there exists a trade-off between utility and invariance.
\begin{figure}[t]
    \centering
    \includegraphics[width=0.4\textwidth]{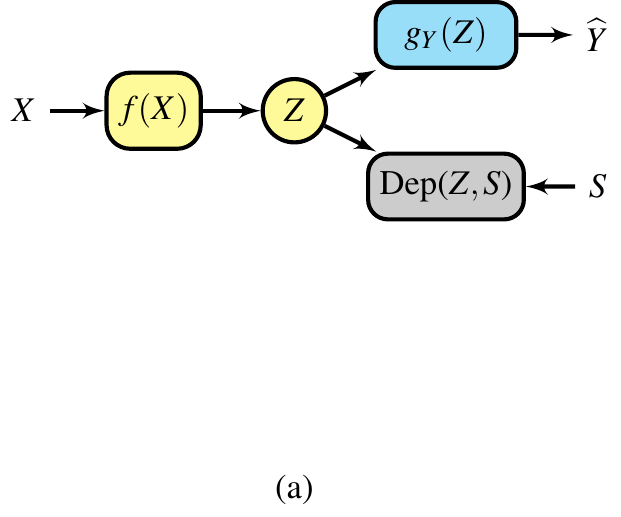}
    \includegraphics[width=0.45\textwidth, height=8cm]{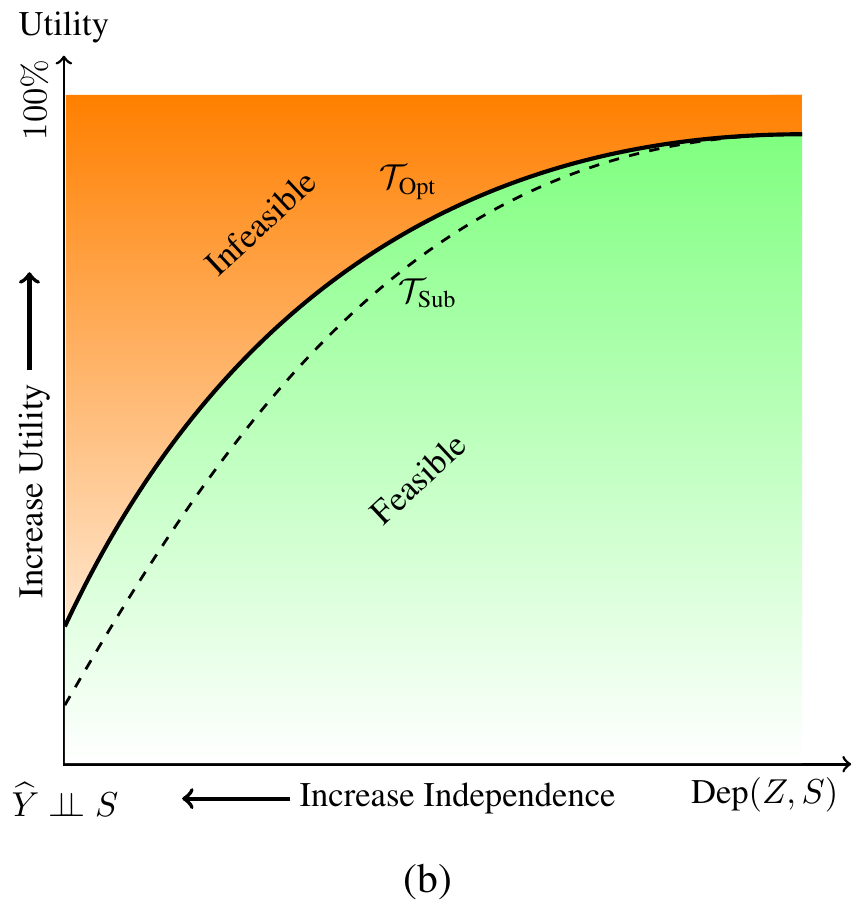}
    \caption{(a): Invariant representation learning seeks a representation $Z=f(X)$ that contains as much information necessary for the downstream target predictor $g_Y$ while being independent of the semantic attribute $S$. (b):  The trade-off (denoted by $\mathcal T_{\text{Opt}}$) between utility (target task performance) and invariance (measured by the  dependence metric Dep$(Z, S)$) is induced by a controlled representation learner in the hypothesis class of all Borel functions. \label{fig:overview}}
\end{figure}
The existence of a trade-off has been well established, both theoretically and empirically, under various contexts of representation learning such as fairness~\citep{menon2018cost, zhao2019inherent, gouic2020projection, zhao2021costs}, invariance~\citep{zhao2020fundamental}, and domain adaptation~\citep{zhao2019learning}. However, much of this body of work only establishes bounds on the trade-off rather than a \emph{precise} characterization. As such, two aspects of the trade-off in invariant representation learning (IRepL) are unknown, including i) \emph{exact} characterization of the trade-off inherent to IRepL and ii) a learning algorithm that achieves the trade-off. Under the assumption that the distribution of a low-dimensional projection of high-dimensional data is approximately normal, this paper studies and establishes the aforementioned properties by constraining function classes to reproducing kernel Hilbert spaces (RKHS)s.

Ideally, the utility-invariance trade-off is defined as a bi-objective optimization problem:
\begin{equation}\label{eq:multi-objective}
\inf_{f\in \mathcal H_X,\, g_Y\in \mathcal H_Y}\mathbb E_{XY}\left[ L_Y\left(g_Y\left(f(X)\right), Y\right)\right] \quad
\text{ such that }\quad \text{Dep}\left(f(X), S\right)\le \epsilon,
\end{equation}
where $f$ is the encoder that extracts the representation $Z=f(X)$ from $X$, $g_Y$ predicts $\widehat{Y}$ from the representation $Z$, $\mathcal H_X$ and $\mathcal H_Y$ are the corresponding hypothesis classes, and $L_Y$ is the loss function for predicting the target attribute $Y$. The function $\text{Dep}(\cdot, \cdot)\geq 0$ is a parametric or non-parametric measure of statistical dependence, i.e., $\text{Dep}(Q, U) = 0$ implies $Q$ and $U$ are independent, and $\text{Dep}(Q, U) > 0$ implies $Q$ and $U$ are dependent with larger values indicating greater degrees of dependence. The scalar $\epsilon \ge 0$ is a user-defined parameter that controls the trade-off between the two objectives, with $\epsilon\rightarrow \infty$ being the standard scenario that has no invariance constraints with respect to (w.r.t.) $S$. In contrast, $\epsilon \rightarrow 0$ enforces $Z \indep S$ (i.e., total invariance). Involving all Borel functions in $\mathcal H_X$ and $\mathcal H_Y$ ensures that the best possible trade-off is included within the feasible solution space. For example, when $\epsilon\rightarrow\infty$ and $L_Y$ is MSE loss, the optimal Bayes estimator, $g_Y\left(f(X)\right)=\mathbb E\left[Y\,|\, X \right]$ is attainable. 

In this paper, we consider the linear combination of utility and invariance in~\eqref{eq:multi-objective} and define the optimal utility-invariance trade-off (denoted by $\mathcal T_{\text{Opt}}$) as a single objective optimization problem: 
\begin{definition}\label{def:1}
\begin{eqnarray}\label{eq:data}
\mathcal T_{\text{Opt}}:=\inf_{f\in \mathcal H_X} \left\{(1-\lambda)\inf_{g_Y\in \mathcal H_Y }\mathbb E_{X,Y}\left[  L_Y\left (g_Y\left(f(X)\right), Y \right)\right]+ \lambda\, \text{Dep}\left(f\left(X\right), S\right) \right\}, \quad 0\le\lambda<1,
\end{eqnarray}
where $\lambda$ controls the trade-off between utility and invariance. For example, $\lambda=0$ corresponds to ignoring the invariance and only optimizing the utility, while $\lambda\rightarrow 1$ corresponds to $Z \indep S$.
\end{definition}
The motivations behind considering this single-objective IRepL are (i) any solution to this simplified problem is a solution to the bi-objective problem in~\eqref{eq:multi-objective}, (ii) even~\eqref{eq:data} is challenging to solve, and (iii) existing works have not thoroughly investigated \eqref{eq:data}.
An illustration of the utility-invariance trade-off is illustrated in Figure~\ref{fig:overview} (b).
In this paper, we restrict $\mathcal H_X$ to be some RKHSs and $\text{Dep}(Z, S)$ to be a simplified version of the Hilbert-Schmidt Independence Criterion (HSIC)~\citep{gretton2005measuring}. Further, we replace the target loss function in~\eqref{eq:data} by $\text{Dep}(Z, Y)$ as presented and justified in Sections~\ref{sec:setting} and~\ref{sec:Bayes}.

\textbf{Summary of Contributions:} i) We design a dependence measure that accounts for all modes of dependence between $Z$ and $S$ \footnote{By ``all modes of dependence" we mean all types of linear and non-linear relations, in contrast to only linear or monotonic relations.} (under a mild assumption) while allowing for analytical tractability. ii) We employ functions in RKHSs and obtain closed-form solutions for the IRepL optimization problem. Consequently, we precisely characterize  a near-optimal approximation of $\mathcal T_{\text{Opt}}$ via encoders restricted to RKHSs. iii) We obtain a closed-form estimator for the encoder that achieves a near-optimal trade-off and establish its numerical convergence. iv) Using random Fourier features (RFF)~\citep{rahimi2007random}, we provide a scalable version (in terms of both memory and computation) of our IRepL algorithm. v) We numerically quantify our $\mathcal T_{\text{Opt}}$ (denoted by K-$\mathcal T_{\text{Opt}}$) on an illustrative problem as well as large-scale real-world datasets, Folktables~\citep{ding2021retiring} and CelebA~\citep{liu2015deep}, where we compare K-$\mathcal T_{\text{Opt}}$ to those obtained by existing works.
\section{Related Work}

\subsection{Invariant Representation Learning} The basic idea of representation learning that discards unwanted semantic information has been explored under many contexts like invariant, fair, or privacy-preserving learning. In domain adaptation~\citep{ganin2015unsupervised, tzeng2017adversarial,zhao2018adversarial}, the goal is to learn features that are independent of the data domain. In fair learning \citep{dwork2012fairness, ruggieri2014using, feldman2015certifying, calmon2017optimized,  zemel2013learning,edwards2015censoring,beutel2017data,xie2017controllable,zhang2018mitigating,song2019learning,madras2018learning,bertran2019adversarially,creager2019flexibly,locatello2019fairness,mary2019fairness,martinez2020minimax,sadeghi2019global}, the goal is to discard the demographic information that leads to unfair outcomes. Similarly, there is growing interest in mitigating unintended leakage of private information from representations~\citep{hamm2017minimax, coavoux2018privacy, roy2019mitigating, xiao2020adversarial, dusmanu2020privacy}.

A vast majority of this body of work is empirical. They implicitly look for single or multiple points on the trade-off between utility and semantic information and do not explicitly seek to characterize the whole trade-off front. Overall, these approaches are not concerned with or aware of the inherent utility-invariance trade-off. In contrast, with the cost of restricting encoders to lie in some RKHSs, we \emph{precisely} characterize the trade-off and propose a practical learning algorithm that achieves this trade-off.

\subsection{Adversarial Representation Learning} Most practical approaches for learning fair, invariant, domain adaptive, or privacy-preserving representations discussed above are based on adversarial representation learning (ARL). ARL is typically formulated as 
\begin{eqnarray}\label{eq:arl}
\inf_{f\in \mathcal H_X} \left\{(1-\lambda) \inf_{g_Y\in \mathcal H_Y}\mathbb E_{X,Y}\left[  L_Y\left (g_Y\left(f(X)\right), Y \right)\right] - \lambda\, \inf_{g_S\in \mathcal H_S}\mathbb E_{X,S}\left[ L_S\left (g_S\left(f(X)\right), S \right)\right] \right\},
\end{eqnarray}
where $ L_S$ is the loss function of a hypothetical adversary $g_S$, who intends to extract the semantic attribute $S$ through the best estimator within the hypothesis class $\mathcal H_S$, and $0\le\lambda<1$ is the utility-invariance trade-off parameter. ARL is a special case of~(\ref{eq:data}) where the negative loss of the adversary, $-\displaystyle\inf_{g_S\in \mathcal H_S}\mathbb E_{X,S}\left[  L_S\left (g_S\left(f(X)\right), S \right)\right]$ plays the role of $\text{Dep}(f(X), S)$. However, this form of adversarial learning suffers from a critical drawback. The induced independence measure is not guaranteed to account for all modes of non-linear dependence between $S$ and $Z$ if the adversary loss function $ L_S$ is not bounded like MSE or cross-entropy~\citep{adeli2021representation,grari2020learning}. In the case of MSE loss, even if the loss is maximized at a bounded value, where the corresponding representation $Z=f(X)$ is also bounded, it still does not guarantee that $Z\indep S$ is attainable (see Appendix~\ref{sec:app-mse} for more details). This implies that designing the adversary loss in ARL to account for all types of dependence is challenging and can be infeasible for some loss functions.

\subsection{Trade-Offs in Invariant Representation Learning:} 
Prior work has established the existence of trade-offs in IRepL, both empirically and theoretically. In the following, we categorize them based on properties of interest.

\noindent\textbf{Restricted Class of Attributes:} A majority of existing work considers IRepL trade-offs under restricted settings, e.g., binary and/or categorical attributes $Y$ and $S$. For instance, \citet{zhao2019trade} uses information-theoretic tools and characterizes the utility-fairness trade-off in terms of lower bounds when both $Y$ and $S$ are binary labels. Later \citet{mcnamara2019costs} provided both upper and lower bounds for binary labels. By leveraging Chernoff bound, \citet{dutta2020there} proposed a construction method to generate an ideal representation beyond the input data to achieve perfect fairness while maintaining the best performance on the target task. In the case of categorical features, a lower bound on utility-fairness trade-off has been provided by~\citet{zhao2019inherent} for the total invariance scenario (i.e., $Z\indep S$). In contrast to this body of work, our trade-off analysis applies to multi-dimensional continuous/discrete attributes. To our knowledge, the only prior results under a general setting are~\citet{sadeghi2019global} and~\citet{zhao2020fundamental}. However, in~\citet{zhao2020fundamental}, both $S$ and $Y$ are restricted to be continuous/discrete or binary simultaneously (e.g., it is not possible to have $Y$ binary while $S$ is continuous).


\noindent\textbf{Characterizing Exact versus Bounds on Trade-Off:}
To the best of our knowledge, all existing approaches except~\citet{sadeghi2019global}, which obtains the trade-off for the linear dependence only, characterize the trade-off in terms of upper and/or lower bounds. In contrast, we \emph{precisely} characterize a near-optimal trade-off with closed-form expressions for encoders belonging to some RKHSs.

\noindent\textbf{Optimal Encoder and Representation:} Another property of practical interest is the optimal encoder that achieves the desired point on the utility-invariance trade-off and the corresponding representation(s). Existing works which only study bounds on the trade-off do not obtain the encoder that achieves those bounds. For example, \citet{sadeghi2019global} develop a learning algorithm that obtains a globally optimal encoder, but only under a linear dependence measure between $Z$ and $S$.  HSIC, a universal measure of dependence, has been adopted by prior work (e.g.,~\citet{quadrianto2019discovering}) to quantify all types of dependencies between $Z$ and $S$. However, these methods adopt stochastic gradient descent for optimizing the underlying non-convex optimization problem. As such, they fail to provide guarantees that the representation learning problem converges to a global optimum. In contrast, we obtain a closed-form solution for the globally optimal encoder and its corresponding representation while detecting all modes of dependence between $Z$ and $S$.

\subsection{Kernel Method}
The technical machinery of our kernel method for representation learning is closely related to kernelized component analysis~\citep{scholkopf1998nonlinear}. Kernel methods have been previously used for fair representation learning by~\citet{perez2017fair} where the Rayleigh quotient is employed to only search for a single point in the utility-invariance trade-off. To find the entire trade-off,~\citet{sadeghi2019global} used kernelized ARL with a linear adversary and target estimator. Kernel methods also have been used to measure all modes of dependence between two RVs, pioneered by~\citet{bach2002kernel} in kernel canonical correlation (KCC). Building upon KCC, later, ~\citet{gretton2005measuring,gretton2005kernel,gretton2006kernel} have introduced HSIC, constrained covariance (COCO), and maximum mean discrepancy (MMD), to name a few. Inspired by these works, a variation of HSIC is adopted as a measure of dependence in this paper.
\section{Problem Setting}
\subsection{Notation}
Scalars are denoted by regular lowercase letters, e.g., $r$, $\lambda$. Deterministic vectors are denoted by boldface lowercase letters, e.g., $\bm x$, $\bm s$. We denote both scalar-valued and multidimensional random variables (RV)s by regular upper case letters, e.g., $X$, $S$. Deterministic matrices are denoted by boldface upper case letters, e.g., $\bm H$, $\bm \Theta$. The entry at $i$-th row, $j$-th column of a matrix $\bm M$ is denoted by $\left(\bm M \right)_{ij}$ or $m_{ij}$. $\bm I_n$ or simply $\bm I$ denotes an $n\times n$ identity matrix; $\bm 1_n$  and $\bm 0_n$ denote $n$-tuple vectors of ones and zeros, respectively. We denote the trace of a square matrix $\bm K$ by $\text{Tr}[\bm K]$. The pseudo-inverse of a matrix $\bm U$ is denoted by $\bm U^\dagger$. We denote finite or infinite sets by calligraphy letters, e.g., $\mathcal H$, $\mathcal A$.

\subsection{Problem Setup\label{sec:setting}}
Consider the probability space $(\Omega, \mathcal F, \mathbb P)$, where $\Omega$ is the sample space, $\mathcal F$ is a $\sigma-$algebra on $\Omega$, and $\mathbb P$ is a probability measure on $\mathcal F$. We assume that the joint RV, $(X, Y, S)$ containing the input data $X\in \mathbb R^{d_X}$, the target label $Y\in \mathbb R^{d_Y}$, and the semantic attribute $S\in \mathbb R^{d_S}$, is a RV on $(\Omega, \mathcal F)$ with joint distribution $\bm p_{X, Y, S}$. Furthermore, $Y$ and $S$ can also belong to any finite set, such as a categorical set. This setting enables us to work with both classification and multidimensional regression tasks, where the semantic attribute can be either categorical or multidimensional continuous/discrete RV.
\begin{assumption}\label{assum:1}
We assume that the encoder consists of $r$ functions from $\mathbb R^{d_X}$ to $\mathbb R$ in a universal RKHS 
$\left(\mathcal H_X,\, k_X(\cdot, \cdot)\right)$ (e.g., RBF Gaussian kernel), where universality ensures that $\mathcal H_X$ can approximate any Borel function with arbitrary precision~\citep{sriperumbudur2011universality}. 
\end{assumption}
Hence, the representation RV $Z$ can be expressed as
\begin{eqnarray}\label{eq:f}
 Z = \bm f(X):=\left[ Z_1, \cdots,Z_r \right]^T
\in \mathbb R^r,\quad Z_j=f_j(X),f_j\in \mathcal H_X \ \forall{j=1,\dots, r},
\end{eqnarray}
where $r$ is the dimensionality of the representation. As we will discuss in Corollary \ref{corrolary:r}, unlike common practice where $r$ is chosen on an ad-hoc basis, it is an object of interest for optimization. We consider a general scenario where both $Y$ and $S$ can be continuous/discrete or categorical, or one of $Y$ or $S$ is continuous/discrete while the other is categorical. To accomplish this, we replace the target loss, $\displaystyle\inf_{g_Y \in \mathcal H_Y}\mathbb E_{X, Y}\left[ L_Y\left( g_Y(Z), Y\right)\right]$ in~(\ref{eq:data}) by the negative of a non-parametric measure of dependence, i.e., $-\text{Dep}\left(Z, Y\right)$.
The main reason for this replacement is that maximizing statistical dependency between the representation $Z$ and the target attribute $Y$ can flexibly learn a representation applicable to different downstream target tasks, including regression, classification, clustering, etc~\citep{barshan2011supervised}. Particularly, Theorem~\ref{thm:bayes} in Section~\ref{sec:Bayes} indicates that with an appropriate choice of involved RKHS for $\text{Dep}\left(Z, Y\right)$, we can learn a representation that lends itself to an estimator that performs as optimally as a Bayes estimator i.e., $\mathbb E_{X}[Y|X]$. Furthermore, in an unsupervised setting, where there is no target attribute $Y$, the target loss can be replaced by $\text{Dep}\left(Z, X\right)$, which implicitly forces the representation $Z$ to be as dependent on the input data $X$. This scenario is of practical interest when a data producer aims to provide an invariant representation for an unknown downstream target task.

\begin{figure}
\centering
\includegraphics[]{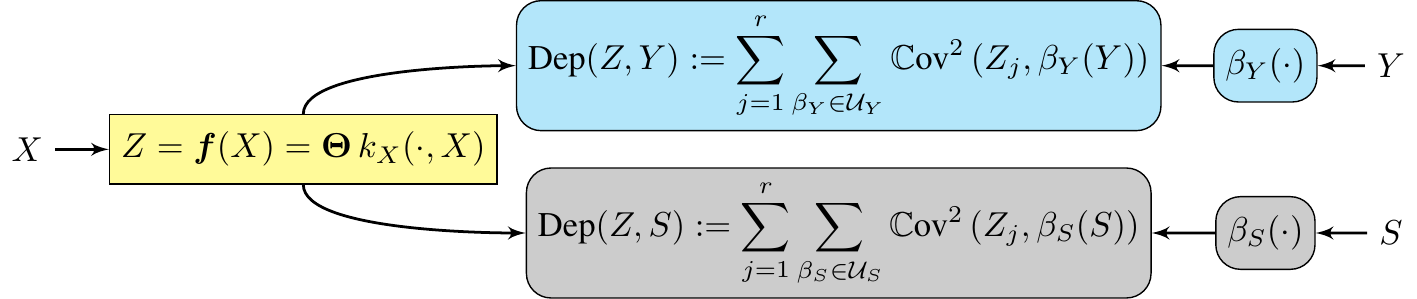}
    \caption{Our IRepL model consists of three components: i) An $r$-dimensional encoder $\bm f$ belonging to the universal RKHS $\mathcal H_X$. ii) A measure of dependence that accounts for all dependence modes between data representation $Z$ and semantic attribute $S$ induced by the covariance between $Z=\bm f(X)$ and $\beta_S(S)$ where $\beta_S$ belongs to a universal RKHS $\mathcal H_S$. iii) A measure of dependency between $Z$ and the target attribute $Y$ defined similarly as that for $S$.\label{fig:setting}}
\end{figure}

\section{Choice of Dependence Measure\label{sec:dependence-measure}}
We only discuss $\text{Dep} \left( Z, S\right)$ since we adopt the same dependence measure for $\text{Dep} \left( Z, Y\right)$. Accounting for all possible non-linear relations between RVs is a key desideratum of dependence measures. A well-known example of such measures is mutual information (MI) (e.g., MINE~\citep{belghazi2018mutual}). However, calculating MI for continuous multidimensional representations is analytically challenging and computationally intractable. Kernel-based measures are an alternative solution with the attractive properties of being computationally feasible/efficient and analytically tractable~\citep{gretton2005kernel}.

\begin{definition}
Let $\bm D=\left\{(\bm x_1,\bm y_1, \bm s_1),\cdots,(\bm x_n,\bm y_n, \bm s_n) \right\}$ be the training data, containing $n$ i.i.d. samples from the joint distribution $\bm{p}_{X, Y, S}$. Invoking the representer theorem~\citep{shawe2004kernel}, {it follows that for each $f_j\in \mathcal H_X$ ($j=1,\cdots,r$) we have $Z_j=f_j(X)=\sum_{i=1}^n \theta_{ji} k_X(\bm x_i, X)$ where $\theta_{ij}$s are the learnable linear weights.} Consequently, it follows that
\begin{equation}\label{eq:z}
\bm f(X) =
\bm \Theta \left[k_X(\bm x_1, X), \cdots, k_X(\bm x_n, X) \right]^T,
\end{equation}
where $\bm \Theta\in \mathbb R^{r\times n}$ and $\left( \bm \Theta\right)_{ji}=\theta_{ji}$.
\end{definition}

Principally, $Z\indep S$ if and only if (iff) $\cov(\alpha(Z), \beta_S(S))=0$ for all Borel functions $\alpha: \mathbb R^r\rightarrow \mathbb R$ and $\beta_S: \mathbb R^{d_S}\rightarrow \mathbb R$ belonging to the universal RKHSs $\mathcal H_Z$ and $\mathcal H_S$, respectively. Alternatively, $Z\indep S$ iff HSIC$(Z, S)=0$ where HSIC~\citep{gretton2005measuring} is defined as
\begin{eqnarray}\label{eq:def-hsic}
\text{HSIC}(Z, S) :=\sum_{\alpha \in \mathcal U_Z }\sum_{\beta_S \in \mathcal U_S } \cov^2\left(\alpha(Z), \beta_S(S)\right),
\end{eqnarray}
 where $\mathcal U_Z$ and $\mathcal U_S$ are countable orthonormal basis sets for the separable universal RKHSs $\mathcal H_Z$ and $\mathcal H_S$, respectively. However, since $Z = \bm{f}(X)$ where $\bm f$ is defined in~\eqref{eq:f}, calculating $\cov(\alpha(Z), \beta_S(S))$ necessitates the application of a cascade of kernels, which limits the analytical tractability of our solution. Therefore, we adopt a simplified version of HSIC that considers transformation on $S$ only but affords analytical tractability for solving the IRepL optimization problem. We define this measure as
\begin{eqnarray}\label{eq:def-dep}
\text{Dep}(Z, S) :=\sum_{j=1}^r \sum_{\beta_S \in \mathcal U_S } \cov^2\left(Z_j, \beta_S(S)\right),
\end{eqnarray}
where $Z_j=f_j(X)$ for $f_j$s defined in~\eqref{eq:f}. We note that Dep$(\cdot, \cdot)$, unlike HSIC and other kernelization-based dependence measures, is not symmetric. However, symmetry is not necessary for measuring statistical dependence.
To guarantee the boundedness of Dep$(Z, S)$ and $\bm f(X)$, we make the following assumption in the remainder of this paper.
\begin{assumption}\label{assum:2}
We assume that $\left(\mathcal H_S, \, k_S(\cdot, \cdot)\right)$ and $\left(\mathcal H_Y,\, k_Y(\cdot, \cdot)\right)$ are separable\footnote{A Hilbert space is separable iff it has a countable orthonormal basis set.} and the kernel functions are bounded:
\begin{eqnarray}\label{eq:ker-assumption}
\mathbb E_U\left[k_U(U, U)\right] < \infty, \quad 
\text{ for } U=X, Y, S.
\end{eqnarray}
\end{assumption}
The measure $\text{Dep}(Z, S)$ in~\eqref{eq:def-dep} captures all modes of non-linear dependence under the assumption that the distribution of a low-dimensional projection of high-dimensional data is approximately normal~\citep{diaconis1984asymptotics}, \citep{hall1993almost}. To see why this reasoning is relevant, we note from~\eqref{eq:z} that $Z$ can be expressed as $Z=\bm \Theta V$, where $V\in \mathbb R^n$ and $\bm \Theta \in \mathbb R^{r\times n}$. This indicates that for large $n$ and small $r$ (which is the case for most real-world datasets), $Z$ is indeed a low-dimensional projection of high-dimensional data. 
In other words, $\left(Z, \beta_S(S)\right)$ is approximately a jointly Gaussian RV.
In our numerical experiments in Section~\ref{sec:experiment}, we empirically observe that $\text{Dep}(Z, S)$ enjoys an almost monotonic relation with the underlying invariance measure and captures all modes of dependency in practice, especially as $Z\indep S$. Nevertheless, if the normality assumption on the distribution of $(Z,\beta_S(S))$ fails, Dep$(Z, S)$ reduces to measuring the linear dependency between $Z$ and $\beta_S(S)$ for all Borel functions $\beta_S$. This corresponds to measuring the mean independency of $Z$ from $S$, i.e., how much information a predictor (linear and non-linear) can infer (in the sense of MSE) about $Z$ from $S$. See Appendix~\ref{sec:app-mse} for more technical details on mean independency. 

\begin{lemma}
\label{lemma:emp}
Let $\bm K_X,\bm K_S\in \mathbb R^{n\times n}$ be the Gram matrices corresponding to $\mathcal H_X$ and $\mathcal H_S$, respectively, i.e., $\left(\bm K_X\right)_{ij}=k_X(\bm x_i, \bm x_j)$ and $\left(\bm K_S\right)_{ij}=k_S(\bm s_i, \bm s_j)$, where covariance is empirically estimated as
\begin{eqnarray}
\cov\left(f_j(X), \beta_S(S) \right)\approx \frac{1}{n}\sum_{i=1}^n f_j(\bm x_i)\beta_S(\bm s_i)-\frac{1}{n^2} \sum_{i=1}^n\sum_{k=1}^n f_j(\bm x_i) \beta_S(\bm s_k).\nn
\end{eqnarray}
It follows that, the corresponding empirical estimator for $\text{Dep}\left(Z, S\right)$ is
\begin{eqnarray}\label{eq:empirical-form}
\text{Dep}^{\text{emp}}\left(Z, S\right)&=&\frac{1}{n^2}\left\|\bm \Theta \bm K_X \bm H \bm L_S \right\|^2_F,
\end{eqnarray}
where $\bm H = \bm I_n-\frac{1}{n} \bm 1_n \bm 1_n^T$ is the centering matrix, $\bm L_S$ is a full column-rank matrix in which $\bm L_S \bm L_S^T=\bm K_{S}$ (Cholesky factorization), and $\bm K_S$ is the Gram matrix corresponding to $\mathcal H_S$. Furthermore, the empirical estimator in \eqref{eq:empirical-form} has a bias of $\mathcal O(n^{-1})$ and a convergence rate of $\mathcal{O}(n^{-1/2})$.
\end{lemma}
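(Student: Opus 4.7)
The plan is to collapse the infinite sum over the orthonormal basis $\mathcal U_S$ in~\eqref{eq:def-dep} into an $\mathcal H_S$-norm via Parseval's identity, express that norm as a quadratic form in the centered Gram matrix $\bm H\bm K_S\bm H$, and then apply the Cholesky factorization $\bm K_S=\bm L_S\bm L_S^T$ to recast it as the stated Frobenius norm. The bias and convergence-rate claims then follow by identifying $\text{Dep}^{\text{emp}}$ as a V-statistic of order two with bounded kernel.

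\textbf{Derivation of~\eqref{eq:empirical-form}.} A direct rearrangement of the empirical covariance formula in the statement gives $\cov\bigl(f_j(X),\beta_S(S)\bigr)\approx\frac{1}{n}\sum_i \tilde f_j(\bm x_i)\,\beta_S(\bm s_i)$, where $\tilde f_j(\bm x_i):=f_j(\bm x_i)-\frac{1}{n}\sum_k f_j(\bm x_k)$. By the reproducing property of $\mathcal H_S$ this equals $\langle\beta_S, w_j\rangle_{\mathcal H_S}$ with $w_j:=\frac{1}{n}\sum_i \tilde f_j(\bm x_i)\,k_S(\bm s_i,\cdot)\in\mathcal H_S$. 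Separability of $\mathcal H_S$ (Assumption~\ref{assum:2}) enables Parseval's identity to collapse the inner sum of~\eqref{eq:def-dep} to $\|w_j\|_{\mathcal H_S}^2$, which the kernel trick expresses as $\frac{1}{n^2}\bm f_j^T\bm H\bm K_S\bm H\bm f_j$, where $\bm f_j:=[f_j(\bm x_1),\dots,f_j(\bm x_n)]^T$. Summing over $j=1,\dots,r$ and stacking the $\bm f_j^T$ as the rows of $\bm\Theta\bm K_X$ (by~\eqref{eq:z} evaluated at the training points) gives $\frac{1}{n^2}\text{Tr}\bigl(\bm\Theta\bm K_X\bm H\bm K_S\bm H\bm K_X^T\bm\Theta^T\bigr)$. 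Substituting $\bm K_S=\bm L_S\bm L_S^T$ and using the trace--Frobenius identity $\text{Tr}(\bm A\bm A^T)=\|\bm A\|_F^2$ with $\bm A=\bm\Theta\bm K_X\bm H\bm L_S$ then yields~\eqref{eq:empirical-form}.

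\textbf{Bias, convergence rate, and main obstacle.} Expanding the Frobenius norm shows that $\text{Dep}^{\text{emp}}$ is a symmetric bivariate V-statistic in the i.i.d. pairs $\{(\bm x_i,\bm s_i)\}$, with kernel bounded by~\eqref{eq:ker-assumption}. The gap between a bivariate V-statistic and its associated unbiased U-statistic is $\mathcal O(n^{-1})$, producing the claimed bias, while the U-statistic concentrates at the CLT rate $\mathcal O(n^{-1/2})$, yielding the advertised convergence rate; equivalently, one may adapt the HSIC concentration analysis in~\citep{gretton2005measuring} essentially verbatim, since our estimator differs from empirical HSIC only in how the $X$-side is folded in via the coefficient matrix $\bm\Theta$. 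The technically delicate step is the Parseval reduction: one must verify that $w_j$ lies in $\mathcal H_S$ and that the orthonormal series converges in $\mathcal H_S$-norm, both of which rest on separability and on the boundedness $\mathbb E[k_S(S,S)]<\infty$ of Assumption~\ref{assum:2}, but must be invoked with care when recasting the empirical covariance as an inner product with $w_j$.
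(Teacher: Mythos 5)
Your derivation of the identity~\eqref{eq:empirical-form} is correct and in fact cleaner than the paper's: you push the sum over $\mathcal U_S$ through Parseval's identity directly to $\|w_j\|_{\mathcal H_S}^2$ and then apply the reproducing property, whereas the paper argues (somewhat more awkwardly) that the sum over $\mathcal U_S$ collapses to a sum over a finite orthonormal basis of the span of $\{k_S(\bm s_i,\cdot)\}_{i=1}^n$, parameterized via $\bm L_S^T\bm\alpha\in\mathcal U_q$. Both routes land on $\frac{1}{n^2}\text{Tr}[\bm\Theta\bm K_X\bm H\bm K_S\bm H\bm K_X\bm\Theta^T]$; your version avoids the hand-wave about ``reconstructing'' the basis. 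Also, the ``delicate'' point you flag is actually immediate: $w_j$ is a finite linear combination of $k_S(\bm s_i,\cdot)$, hence trivially in $\mathcal H_S$, and Parseval in a separable Hilbert space needs no extra justification.

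The gap is in the bias/concentration sketch. You call $\text{Dep}^{\text{emp}}$ a ``symmetric bivariate V-statistic'' of order two, but the two centering matrices $\bm H=\bm I-\frac{1}{n}\bm 1\bm 1^T$ prevent this. Writing out $\bm H\bm K_S\bm H$ shows that
$\frac{1}{n^2}\text{Tr}[\bm A\bm H\bm K_S\bm H]$ with $a_{ij}=\sum_k f_k(\bm x_i)f_k(\bm x_j)$ decomposes into a degree-2 piece $\frac{1}{n^2}\sum_{i,j}a_{ij}(\bm K_S)_{ij}$, a degree-3 piece $\frac{1}{n^3}\sum_{i,j,k}a_{ij}(\bm K_S)_{jk}$, and a degree-4 piece $\frac{1}{n^4}\sum_{i,j,k,l}a_{ij}(\bm K_S)_{kl}$. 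This is exactly why the paper's proof splits the trace into three terms (I)–(III), enumerates the multi-index sums over tuples drawn with and without replacement to extract the $\mathcal O(n^{-1})$ bias, and then defines \emph{three} U-statistics $u_1,u_2,u_3$ of degrees $2,3,4$ and applies Hoeffding's inequality to each with a union bound (that is where the $0.22$ constant in Theorem~\ref{thm:emp-convg} comes from). Your conclusion — $\mathcal O(n^{-1})$ bias and $\mathcal O(n^{-1/2})$ deviation — is still correct, because the V-vs-U gap and the Hoeffding rate for bounded U-statistics of \emph{any} fixed degree are of these orders, and your fallback appeal to Gretton et al.'s HSIC analysis is apt. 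But the claim of order two as written is false and should be replaced by the order-$\{2,3,4\}$ decomposition (or a single degree-4 statistic), with boundedness coming from $k_S\le 1$ and $f_j^2(\bm x_i)\le M$ rather than merely $\mathbb E[k_S(S,S)]<\infty$.
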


\begin{proof}
The main idea for proving equality~\eqref{eq:empirical-form} is to employ the representer theorem to express $f_j$ and $\beta_S$. For the bias and convergence rate, one can express equation~\eqref{eq:empirical-form} in terms of trace and then employ U-statistics together with Hoeffdig's inequality~\citep{hoeffding1994probability}. For complete proof, see Appendix~\ref{sec:app-lemma-emp}.
\end{proof}

Finally, we note that the dependence measure between $Z$ and $Y$ can be defined similarly.

\section{Exact Kernelized Trade-Off\label{sec:exact-trade-offs}}
Consider the optimization problem corresponding to $\mathcal T_{\text{Opt}}$ in~(\ref{eq:data}). Recall that $Z =\bm f(X)$ is an $r$-dimensional RV, where the embedding dimensionality $r$ is also a variable to be optimized. A common desideratum of learned representations is that of compactness~\citep{bengio2013representation} to avoid learning representations with redundant information where different dimensions are highly correlated to each other. Therefore, going beyond the assumption that each component of $\bm f$ (i.e., $f_j$s) belongs to the universal RKHS $\mathcal H_X$, we impose additional constraints on the representation. Specifically, we constrain the search space of the encoder $\bm f(\cdot)$ to learn a disentangled representation~\citep{bengio2013representation} as follows
\begin{eqnarray}\label{eq:A}
\mathcal A_r:=\left\{\left(f_1,\cdots, f_r\right) \,\big|\, f_i, f_j\in \mathcal H_X,\, \cov \left(f_i(X), f_j(X) \right) +\gamma\, \langle f_i, f_j\rangle_{\mathcal H_X}=\delta_{i,j}\right\}.
\end{eqnarray}
In the above set, the $\cov \left(f_i(X), f_j(X) \right)$ part enforces the covariance matrix of $Z=\bm f(X)$ to be an identity matrix. Such disentanglement is also used in principal component analysis (PCA). It encourages the variance of each entry of $Z$ to be one and different entries of $Z$ to be uncorrelated with each other. The regularization part, $\gamma\,\langle f_i, f_j\rangle_{\mathcal H_X}$ encourages the encoder components to be as orthogonal as possible to each other and to be of unit norm, which aids with numerical stability during empirical estimation~\citep{fukumizu2007statistical}. As the following theorem states formally, such disentanglement is an invertible transformation; therefore, it does not nullify any information.

\begin{theorem}\label{thm:disentanglement}
Let $Z=\bm f(X)$ be an arbitrary representation of the input data, where $\bm f \in \mathcal H_X$. Then, there exists an invertible Borel function $\bm h$, such that $\bm h \circ \bm f$ belongs to $\mathcal A_r$.
\end{theorem}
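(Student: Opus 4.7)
The plan is to construct $\bm h$ as an invertible linear map on $\mathbb{R}^r$ that implements a Gram--Schmidt orthonormalization of the components of $\bm f$ with respect to the bilinear form appearing in the definition of $\mathcal{A}_r$. Introduce
$$B(u,v) \,:=\, \cov\bigl(u(X),v(X)\bigr) \,+\, \gamma\,\langle u,v\rangle_{\mathcal{H}_X}, \qquad u,v\in\mathcal{H}_X,$$
and observe that $\cov(\cdot,\cdot)$ is positive semidefinite on $\mathcal{H}_X$ while $\gamma\langle\cdot,\cdot\rangle_{\mathcal{H}_X}$ is strictly positive definite for $\gamma>0$; hence $B$ is an inner product on $\mathcal{H}_X$, and the defining condition of $\mathcal{A}_r$ is precisely the $B$-orthonormality of the components of the encoder.

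Next, assuming the components $f_1,\dots,f_r$ of $\bm f$ are linearly independent in $\mathcal{H}_X$ (the generic situation, meaning the representation genuinely uses all $r$ coordinates), apply Gram--Schmidt with respect to $B$ to the ordered tuple $(f_1,\dots,f_r)$ to obtain $g_1,\dots,g_r\in\mathrm{span}(f_1,\dots,f_r)\subseteq\mathcal{H}_X$ satisfying $B(g_i,g_j)=\delta_{i,j}$. This produces an upper-triangular matrix $\bm A\in\mathbb{R}^{r\times r}$ with nonzero diagonal (hence invertible) such that $\bm g=\bm A\bm f$ as $\mathcal{H}_X$-valued tuples. Define $\bm h:\mathbb{R}^r\to\mathbb{R}^r$ by $\bm h(\bm z):=\bm A\bm z$; since $\bm A$ is invertible, $\bm h$ is an invertible linear, and hence Borel, bijection. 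Then $(\bm h\circ\bm f)(X)=\bm A\bm f(X)=\bm g(X)$, whose components lie in $\mathcal{H}_X$ and satisfy $B(g_i,g_j)=\delta_{i,j}$, so $\bm h\circ\bm f\in\mathcal{A}_r$ by construction.

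The one delicate point is the degenerate case in which $f_1,\dots,f_r$ are linearly dependent in $\mathcal{H}_X$: then the $B$-Gram matrix is singular and no invertible linear map can whiten it. I expect this to be the main obstacle, and I would address it by arguing that linear independence can be imposed without loss of generality, since $\mathcal{H}_X$ is an infinite-dimensional universal RKHS, so any redundant component of $\bm f$ can be replaced by an arbitrarily small perturbation that restores full rank while leaving the induced representation unchanged for practical purposes. Once this nondegeneracy is established, the rest of the argument is routine linear algebra inside the finite-dimensional subspace $\mathrm{span}(f_1,\dots,f_r)$ equipped with the inner product $B$, and the invertibility and Borel-measurability of $\bm h$ follow automatically from $\bm A$ being a nonsingular matrix.
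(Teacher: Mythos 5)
Your construction is a genuinely different, and in fact cleaner, route than the paper's. The paper whitens in two stages: it first applies the operator $(\Sigma_{XX}+\gamma I_X)^{-1/2}$ to each $f_m$ and then mixes the results by the matrix $\bm G=\bm G_0^{-1/2}$, where $\bm G_0$ is the $\mathcal H_X$-Gram matrix of $\bm f$. This does achieve $B$-orthonormality of the resulting tuple, but the functions $(\Sigma_{XX}+\gamma I_X)^{-1/2} f_m$ generically leave $\operatorname{span}(f_1,\dots,f_r)$, so the resulting encoder is \emph{not} of the form $\bm h\circ\bm f$ for any Borel map $\bm h:\mathbb R^r\to\mathbb R^r$ -- which is what the theorem actually asserts, and what is needed downstream for the ``a target network can revert $\bm h$'' argument. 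Your construction stays inside $\operatorname{span}(f_1,\dots,f_r)$: you orthonormalize the tuple $(f_1,\dots,f_r)$ directly with respect to the inner product $B$, obtaining $\bm g=\bm A\bm f$ with $\bm A$ an invertible $r\times r$ matrix, and then $\bm h(\bm z)=\bm A\bm z$ is literally a linear, invertible, Borel map on $\mathbb R^r$ with $\bm h\circ\bm f=\bm g\in\mathcal A_r$. Equivalently, one may take $\bm A=(\bm M+\gamma\bm G_0)^{-1/2}$ where $\bm M$ is the covariance matrix of $\bm f(X)$, so that $\bm A(\bm M+\gamma\bm G_0)\bm A^T=\bm I_r$; Gram--Schmidt just gives the Cholesky-type factorization of the same object. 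What the paper's route buys is a verification expressed directly in operator language (useful elsewhere in the paper); what yours buys is that the object you produce actually has the $\bm h\circ\bm f$ form the statement requires, so yours is the one that proves the theorem as written.

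The gap you flag is real, but your proposed fix does not work. If $f_1,\dots,f_r$ are linearly dependent, then $\bm M+\gamma\bm G_0$ is singular and no linear $\bm h$ can whiten it; you are right about that. However, perturbing $\bm f$ to restore full rank changes the hypothesis of the theorem: the claim is ``for this $\bm f$ there exists $\bm h$,'' not ``for some nearby $\tilde{\bm f}$ there exists $\bm h$.'' Moreover ``unchanged for practical purposes'' is not a proof. If the statement is to hold for a literally degenerate $\bm f$, a nonlinear $\bm h$ would have to be constructed, and one would additionally have to check that the composed components remain in $\mathcal H_X$ -- neither of which you (or the paper) do. The honest reading is that the theorem implicitly assumes $f_1,\dots,f_r$ linearly independent in $\mathcal H_X$ (the paper needs this too, since it uses $\bm G_0^{-1/2}$), and under that assumption your argument is complete and correct; you should state that assumption explicitly rather than trying to remove it by perturbation.
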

\begin{proof}
The main idea is to search for an explicit expression for $\bm h$ in terms of the invertible operator $\Sigma_{XX}+\gamma\,I_X$, where $\Sigma_{XX}$ is induced by the bi-linear functional $\cov\left(f_i(X), f_j(X)\right)=\left\langle\Sigma_{XX}f_i, f_j\right\rangle_{\mathcal H_X}$, and $I_{X}$ is the identity operator from $\mathcal H_X$ to itself. See Appendix~\ref{sec:app-thm-disentanglement} for complete proof.
\end{proof}
This Theorem implies that the disentanglement preserves the performance of the downstream task since any target network can revert the disentanglement $\bm h$ and access the original representation $Z$. In addition, any deterministic measurable transformation of $Z$ will not add any information about $S$ that does not already exist in $Z$.

We define our K$-\mathcal T_{\text{Opt}}$ as
\begin{eqnarray}\label{eq:main}
\sup_{\bm f \in \mathcal A_r} \left\{J\left(\bm f, \lambda\right):=(1-\lambda)\,\text{Dep}\left(\bm f(X), Y\right)-\lambda\, \text{Dep}\left(\bm f(X), S\right)\right\}, \quad 0\le \lambda < 1,
\end{eqnarray}
where $\lambda$ is the utility-invariance trade-off parameter. 
Fortunately, the above optimization problem lends itself to a closed-form solution.
\begin{theorem}\label{thm:main}
Consider the operator  $\Sigma_{SX}$ to be induced by the bi-linear functional $\cov(\alpha(X), \beta_S(S))=\left\langle \Sigma_{SX}\alpha, \beta_S\right\rangle_{\mathcal H_S}$ and define $\Sigma_{YX}$ and $\Sigma_{XX}$, similarly. Then, a global optimizer for the optimization problem in~(\ref{eq:main}) is the eigenfunctions corresponding to the $r$ largest eigenvalues of the following generalized eigenvalue problem
\begin{eqnarray}
\left((1-\lambda)\,\Sigma^*_{Y X}\Sigma_{Y X} -\lambda\,\Sigma^*_{S X}\Sigma_{S X}\right) \bm f = \tau \, \left(\Sigma_{X X} +\gamma\,I_X \right)\bm f,
\end{eqnarray}
where $\gamma$ is the disentanglement regularization parameter defined in \eqref{eq:A}, and $\Sigma^*$ is the adjoint of $\Sigma$.
\end{theorem}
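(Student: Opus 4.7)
The plan is to recast~\eqref{eq:main} as a constrained quadratic-form maximization on $\mathcal H_X$, reduce it to a standard eigenvalue problem via a symmetric change of variables, and conclude via the Ky--Fan maximum principle (the Hilbert-space Courant--Fischer theorem).

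First I would re-express $\text{Dep}$ in operator form. By definition of $\Sigma_{SX}$, $\langle \Sigma_{SX} f, \beta_S\rangle_{\mathcal H_S} = \cov(f(X), \beta_S(S))$, so Parseval's identity on the orthonormal basis $\mathcal U_S$ of $\mathcal H_S$ gives
\begin{equation*}
\sum_{\beta_S\in \mathcal U_S}\cov^2\bigl(f_j(X), \beta_S(S)\bigr) \;=\; \|\Sigma_{SX} f_j\|^2_{\mathcal H_S} \;=\; \langle \Sigma^*_{SX}\Sigma_{SX} f_j, f_j\rangle_{\mathcal H_X}.
\end{equation*}
Summing over $j=1,\dots,r$ and performing the same rewriting for $Y$, the objective collapses to
\begin{equation*}
J(\bm f, \lambda) \;=\; \sum_{j=1}^r \langle A f_j, f_j\rangle_{\mathcal H_X}, \qquad A \,:=\, (1-\lambda)\,\Sigma^*_{YX}\Sigma_{YX} \,-\, \lambda\,\Sigma^*_{SX}\Sigma_{SX},
\end{equation*}
which is bounded and self-adjoint. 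Likewise, since $\cov(f_i(X), f_j(X)) = \langle \Sigma_{XX} f_i, f_j\rangle_{\mathcal H_X}$, the disentanglement constraint defining $\mathcal A_r$ becomes $\langle B f_i, f_j\rangle_{\mathcal H_X} = \delta_{ij}$ where $B := \Sigma_{XX} + \gamma\,I_X$.

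For $\gamma>0$, the operator $B$ is bounded, self-adjoint, and strictly positive, so $B^{1/2}$ is well-defined and invertible with bounded inverse. Substituting $g_j := B^{1/2} f_j$ turns the constraint into the ordinary orthonormality $\langle g_i, g_j\rangle_{\mathcal H_X} = \delta_{ij}$ and the objective into $\sum_{j=1}^r \langle \widetilde A g_j, g_j\rangle_{\mathcal H_X}$ with $\widetilde A := B^{-1/2} A B^{-1/2}$, which is self-adjoint. Under Assumption~\ref{assum:2}, the cross-covariance operators $\Sigma_{SX}$ and $\Sigma_{YX}$ are Hilbert--Schmidt, so $\widetilde A$ is compact and admits a discrete real spectrum $\tau_1 \ge \tau_2 \ge \cdots$ with corresponding orthonormal eigenvectors. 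The Ky--Fan maximum principle then yields
\begin{equation*}
\sup_{\{g_j\}\text{ orthonormal}}\;\sum_{j=1}^r \langle \widetilde A g_j, g_j\rangle_{\mathcal H_X} \;=\; \sum_{j=1}^r \tau_j,
\end{equation*}
attained by the top-$r$ eigenvectors of $\widetilde A$. Back-substituting $f_j = B^{-1/2} g_j$ into $\widetilde A g_j = \tau_j g_j$ gives $A f_j = \tau_j B f_j$, which is precisely the generalized eigenvalue problem in the statement.

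The main obstacle is handling the infinite-dimensional spectral theory cleanly: one must verify that $\widetilde A$ is indeed compact (this follows because $\Sigma^*_{SX}\Sigma_{SX}$ and $\Sigma^*_{YX}\Sigma_{YX}$ are trace-class under Assumption~\ref{assum:2}, and sandwiching by a bounded $B^{-1/2}$ preserves compactness), and one must be careful that since $A$ is \emph{not} positive semidefinite the optimizing eigenvalues may be of either sign. The Ky--Fan principle still applies because it only requires self-adjointness and compactness of $\widetilde A$; the maximum is simply the sum of the $r$ algebraically largest (not largest in magnitude) eigenvalues, which is what the statement intends by the ``$r$ largest eigenvalues.''
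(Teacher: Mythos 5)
Your proposal is correct and follows essentially the same route as the paper's proof: both use Parseval's identity to write $\text{Dep}$ in terms of $\Sigma^*_{SX}\Sigma_{SX}$ (resp.\ $\Sigma^*_{YX}\Sigma_{YX}$), recognize the constraint as generalized orthonormality with respect to $\Sigma_{XX}+\gamma I_X$, and solve the resulting generalized Rayleigh-quotient problem via its equivalent eigenvalue formulation. The paper simply cites this last step, whereas you carry out the symmetric substitution $g_j = B^{1/2}f_j$ explicitly and justify applicability of the Ky--Fan principle via compactness (Hilbert--Schmidt cross-covariance operators under Assumption~\ref{assum:2}), which is a welcome tightening of the argument rather than a different approach.
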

\begin{proof}
The first step is to express Dep$(f(X), Y)$ and Dep$(f(X), S)$ in terms of $\Sigma_{YX}$ and $\Sigma_{SX}$, respectively. The resulting expression can be restated as a generalized Rayleigh quotient~\citep{strawderman1999symmetric} which can be solved via a generalized eigenvalue formulation. For complete proof, see Appendix~\ref{sec:app-thm-main}.
\end{proof}

\begin{remark}\label{remark}
If the trade-off parameter $\lambda=0$ (i.e., no semantic independence constraint is imposed) and $\gamma \rightarrow 0$, the solution in Theorem~\ref{thm:main} is equivalent to a supervised kernel-PCA. On the other hand, if $\lambda \rightarrow 1$ (i.e., utility is ignored and only semantic independence is considered), the solution in Theorem~\ref{thm:main} is the eigenfunctions corresponding to the $r$ smallest eigenvalues of $\Sigma^*_{S X}\Sigma_{S X}$, which are the directions that are the least explanatory of the semantic attribute $S$.
\end{remark}
Now, consider the empirical counterpart of the optimization problem~\eqref{eq:main},
\begin{eqnarray}\label{eq:main-emp}
\sup_{\bm f \in \mathcal A_r} \left\{J^{\text{emp}}(\bm f, \lambda):=(1-\lambda)\,\text{Dep}^{\text{emp}}\left(\bm f(X), Y\right) -\lambda\, \text{Dep}^{\text{emp}}\left(\bm f(X),S\right)\right\}, \quad 0\le \lambda < 1
\end{eqnarray}
where $\text{Dep}^{\text{emp}}\left(\bm f(X), S\right)$ is given in~(\ref{eq:empirical-form}) and $\text{Dep}^{\text{emp}}\left(\bm f(X), Y\right)$
is defined similarly.
\begin{theorem}\label{thm:main-emp}
Let the Cholesky factorization of $\bm K_X$ be $\bm K_X=\bm L_X \bm L_X^T$,  where $\bm L_X\in \mathbb R^{n\times d}$ ($d\le n$) is a full column-rank matrix. Let $r\le d$, then a solution to~(\ref{eq:main-emp}) is 
\begin{eqnarray}
\bm f^{\text{opt}}(X) =
\bm \Theta^{\text{opt}}
\left[k_X(\bm x_1, X),\cdots, k_X(\bm x_n, X)\right]^T\nn
\end{eqnarray}
where $\bm \Theta^{\text{opt}}=\bm U^T \bm L_X^\dagger$ and the columns of $\bm U$ are eigenvectors corresponding to the $r$ largest eigenvalues of the following generalized eigenvalue problem.
\begin{eqnarray}\label{eq:eig-emp}
\bm L^T_X\left((1-\lambda)  \bm H\bm K_Y\bm H  -\lambda \bm H\bm K_S\bm H\right)\bm L_X\bm u = \tau \left(\frac{1}{n}\,\bm L^T_X \bm H \bm L_X + \gamma \bm I\right) \bm u.
\end{eqnarray}
Further, the objective value of~(\ref{eq:main-emp}) is equal to $\sum_{j=1}^r\tau_j$, where $\{\tau_1,\cdots,\tau_r\}$ are the $r$ largest eigenvalues of~\eqref{eq:eig-emp}.
\end{theorem}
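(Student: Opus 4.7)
My plan is to reduce the infinite-dimensional problem in~\eqref{eq:main-emp} to a finite-dimensional generalized eigenvalue problem by exploiting the representer-theorem parameterization and a Cholesky change of variables, then invoke a standard trace-maximization result.

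First, I would apply Lemma~\ref{lemma:emp} to rewrite both dependence terms as matrix traces, yielding the objective $\tfrac{1}{n^2}\text{Tr}\bigl[\bm \Theta \bm K_X \bm H\bigl((1-\lambda)\bm K_Y-\lambda \bm K_S\bigr)\bm H \bm K_X \bm \Theta^T\bigr]$. Substituting the representer-theorem expansion $f_j(X)=\sum_i \theta_{ji}k_X(\bm x_i,X)$ into the definition of $\mathcal A_r$ in~\eqref{eq:A}, the centered empirical covariance of $f_i(X),f_j(X)$ equals the $(i,j)$ entry of $\tfrac{1}{n}\bm \Theta \bm K_X \bm H \bm K_X \bm \Theta^T$ and the RKHS inner product equals $(\bm \Theta \bm K_X \bm \Theta^T)_{ij}$, so the disentanglement constraint collapses to the single matrix identity
\[
\tfrac{1}{n}\,\bm \Theta \bm K_X \bm H \bm K_X \bm \Theta^T + \gamma\,\bm \Theta \bm K_X \bm \Theta^T = \bm I_r.
\]

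Next, I would introduce the change of variables $\bm W:=\bm \Theta \bm L_X\in \mathbb R^{r\times d}$ using $\bm K_X=\bm L_X\bm L_X^T$. Since every occurrence of $\bm \Theta$ in both the objective and the constraint appears sandwiched between factors of $\bm L_X\bm L_X^T$, the entire problem depends on $\bm \Theta$ only through $\bm W$. Setting $\bm M:=\tfrac{1}{n}\bm L_X^T \bm H \bm L_X+\gamma \bm I_d$ (positive definite for $\gamma>0$) and $\bm N:=\bm L_X^T \bm H\bigl((1-\lambda)\bm K_Y-\lambda \bm K_S\bigr)\bm H \bm L_X$, the problem reduces to maximizing $\text{Tr}[\bm W \bm N \bm W^T]$ subject to $\bm W\bm M\bm W^T=\bm I_r$. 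The further substitution $\bm V:=\bm W \bm M^{1/2}$ turns this into maximization of $\text{Tr}[\bm V \bm M^{-1/2}\bm N \bm M^{-1/2}\bm V^T]$ under the Stiefel constraint $\bm V \bm V^T=\bm I_r$, which by the Ky Fan trace theorem is solved by letting the rows of $\bm V$ be the orthonormal eigenvectors of $\bm M^{-1/2}\bm N \bm M^{-1/2}$ associated with its top $r$ eigenvalues. Undoing the substitutions, $\bm W^T$ consists of the top $r$ generalized eigenvectors of the pencil~\eqref{eq:eig-emp}, i.e., $\bm W^T=\bm U$, and then $\bm \Theta^{\text{opt}}=\bm U^T\bm L_X^\dagger$ follows from $\bm W=\bm \Theta \bm L_X$ together with the full column rank of $\bm L_X$.

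The main obstacle I anticipate is justifying that the passage from $\bm \Theta$ to $\bm W$ incurs no loss of optimality: any component of $\bm \Theta$ lying in the left null space of $\bm L_X$ contributes zero to both the objective and the constraint, so selecting the minimum-norm preimage $\bm \Theta^{\text{opt}}=\bm W^{\text{opt}}\bm L_X^\dagger$ is without loss of generality. A secondary care-point is bookkeeping the $1/n^2$ prefactor: the eigenvalues $\tau_j$ listed in~\eqref{eq:eig-emp} are those of the pencil $(\bm L_X^T\bm H((1-\lambda)\bm K_Y-\lambda\bm K_S)\bm H\bm L_X,\;\bm M)$, which I must align with the top generalized eigenvalues that arise directly from the Ky Fan step to read off the claimed optimum $\sum_{j=1}^r\tau_j$.
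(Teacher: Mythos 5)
Your proposal is correct and takes essentially the same route as the paper's proof: parameterize via the representer theorem, convert the disentanglement constraint to $\bm\Theta \bm L_X \bm C \bm L_X^T \bm\Theta^T = \bm I_r$ with $\bm C = \tfrac{1}{n}\bm L_X^T\bm H\bm L_X + \gamma\bm I$, substitute $\bm V = \bm L_X^T\bm\Theta^T$ (your $\bm W$ is its transpose), reduce to a constrained trace maximization, and read off $\bm\Theta^{\text{opt}}$ as the minimum-norm preimage $\bm U^T\bm L_X^\dagger$. The only cosmetic difference is that you explicitly perform the $\bm M^{1/2}$ whitening and invoke Ky Fan, whereas the paper cites \citet{kokiopoulou2011trace} for the same trace-optimization fact; your care-point about the $1/n^2$ prefactor is warranted, and the paper in fact carries it loosely in its final sentence.
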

\begin{proof}
The main idea is to employ empirical expressions obtained in equality~\ref{eq:empirical-form}. The resulting expression on~\eqref{eq:main-emp} can be expressed as a trace optimization problem which can be solved by eigenvalue formulation~\citep{kokiopoulou2011trace}. See Appendix~\ref{sec:app-thm-main-emp} for detailed proof.
\end{proof}
\begin{corollary}\label{corrolary:r}
\emph{Embedding Dimensionality}:
A useful corollary of Theorem~\ref{thm:main-emp} is characterizing optimal embedding dimensionality as a function of the trade-off parameter, $\lambda$:
\begin{eqnarray}\label{eq:r}
r^{\text{Opt}}(\lambda):=\arg\sup_{0\le r\le d }\left\{\sup_{\bm f \in \mathcal A_r} \left\{J^{\text{emp}}\left(\bm f, \lambda\right)\right\}\right\}= \text{ number of non-negative eigenvalues of }\eqref{eq:eig-emp}.\nn
\end{eqnarray}
\end{corollary}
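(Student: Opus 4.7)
My plan is to chain Theorem~\ref{thm:main-emp} with a short marginal analysis of the ordered eigenvalues of~\eqref{eq:eig-emp}. First, I would fix an arbitrary $r \in \{0,1,\dots,d\}$ and invoke Theorem~\ref{thm:main-emp}, which gives the closed-form inner supremum
\begin{equation*}
\sup_{\bm f \in \mathcal A_r} J^{\text{emp}}(\bm f, \lambda) \;=\; \sum_{j=1}^{r} \tau_j,
\end{equation*}
where $\tau_1 \ge \tau_2 \ge \cdots \ge \tau_d$ are the eigenvalues of~\eqref{eq:eig-emp} arranged in non-increasing order, with the empty-sum convention that the right-hand side equals $0$ when $r = 0$. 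The ordering is well-defined because~\eqref{eq:eig-emp} is a symmetric-definite generalized eigenvalue problem: the left-hand matrix is symmetric, while the right-hand matrix $\tfrac{1}{n}\bm L_X^T \bm H \bm L_X + \gamma \bm I$ is symmetric positive definite for any $\gamma > 0$, so the generalized spectrum is real.

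Second, substituting into the outer supremum reduces the corollary to the purely combinatorial optimization
\begin{equation*}
r^{\text{Opt}}(\lambda) \;=\; \argmax_{0 \le r \le d}\; \sum_{j=1}^{r} \tau_j.
\end{equation*}
Because the successive increments $\tau_{r+1}$ are themselves non-increasing in $r$, the partial-sum sequence $r \mapsto \sum_{j=1}^{r} \tau_j$ is concave on $\{0,\dots,d\}$, so its maximum is attained at the largest $r$ with $\tau_r \ge 0$. This count is precisely the number of non-negative eigenvalues of~\eqref{eq:eig-emp}, which is the claimed characterization.

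The main obstacle is minor, since Theorem~\ref{thm:main-emp} already absorbs the analytic work and the outer step collapses to a one-line greedy argument. The only items I would still treat explicitly are the edge cases: if every eigenvalue is negative the optimum collapses to $r = 0$ (the empty encoder), while if every eigenvalue is non-negative it rises to $r = d$, both consistent with the stated count. Ties at zero are immaterial because including a zero eigenvalue leaves the objective unchanged, so the non-negative-eigenvalue count always furnishes a valid maximizer even when several $r$ values are tied.
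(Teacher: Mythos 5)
Your proof is correct and follows essentially the same route as the paper: invoke Theorem~\ref{thm:main-emp} to reduce the outer optimization to $\argmax_r \sum_{j=1}^r \tau_j$ over the non-increasingly ordered eigenvalues, then observe that the greedy/concave-partial-sum argument picks out the count of non-negative eigenvalues. The paper simply asserts "it follows immediately," whereas you spell out the monotone-increment reasoning, the symmetric-definite structure guaranteeing a real spectrum, and the edge cases; these are faithful elaborations rather than a different approach.
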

To examine these results, consider two extreme cases: i) If there is no semantic independence constraint (i.e., $\lambda=0$), all eigenvalues of~\eqref{eq:eig-emp} are non-negative since $\bm H\bm K_Y\bm H$ is a non-negative definite matrix and $\frac{1}{n}\,\bm L^T_X \bm H \bm L_X + \gamma \bm I$ is a positive definite matrix. This indicates that $r^{\text{Opt}}$ is equal to the maximum possible value (that is equal to $d$), and therefore it is not required for $Z$ to nullify any information in $X$. ii) If we are only concerned about semantic independence and want to ignore the target task utility (i.e., $\lambda \rightarrow 1$), all eigenvalues of~\eqref{eq:eig-emp} are non-positive and therefore $r^{\text{Opt}}$ would be the number of zero eigenvalues of~\eqref{eq:eig-emp}. This indicates that $\text{Dep}^{\text{emp}}(Z, S)$ in~\eqref{eq:empirical-form} is equal to zero, since $\bm \Theta^{\text{opt}} \bm K_X$ is zero for zero eigenvalues of~\eqref{eq:eig-emp} when $\lambda \rightarrow 1$. In this case, adding more dimension to $Z$ will necessarily increase $\text{Dep}^{\text{emp}}(Z, S)$.

The following Theorem characterizes the convergence behavior of empirical K$-\mathcal T_{\text{Opt}}$ to its population counterpart.
\begin{theorem}\label{thm:emp-convg}
Assume that $k_{S}$ and $k_{Y}$ are bounded by one and $f^2_j(\bm x_i) \leq M$ for any $j=1,\dots, r$ and $i=1,\dots, n$ for which $\bm f=(f_1,\dots, f_r)\in \mathcal A_r$. Then, for any $n>1$ and $0<\delta<1$, with probability at least $1-\delta$, we have
\begin{eqnarray}
\left|\sup_{\bm f\in \mathcal A_r}J(\bm f, \lambda)-\sup_{\bm f \in \mathcal A_r}J^{\text{emp}}(\bm f, \lambda)\right|\le rM\sqrt{\frac{\log(6/\delta)}{0.22^2\, n}}+ \mathcal O\left(\frac{1}{n}\right)\nn.
\end{eqnarray}
\end{theorem}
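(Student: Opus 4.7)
The plan is to bound the quantity $\bigl|\sup_{\bm f\in\mathcal A_r}J(\bm f,\lambda)-\sup_{\bm f\in\mathcal A_r}J^{\text{emp}}(\bm f,\lambda)\bigr|$ by the uniform deviation $\sup_{\bm f\in\mathcal A_r}|J(\bm f,\lambda)-J^{\text{emp}}(\bm f,\lambda)|$, which follows from the elementary inequality $|\sup A - \sup B|\le \sup|A-B|$ applied to the two functionals of $\bm f$. Since $J(\bm f,\lambda)=(1-\lambda)\,\text{Dep}(\bm f(X),Y)-\lambda\,\text{Dep}(\bm f(X),S)$, the triangle inequality lets us control this pointwise difference by a convex combination of $|\text{Dep}(\bm f(X),Y)-\text{Dep}^{\text{emp}}(\bm f(X),Y)|$ and $|\text{Dep}(\bm f(X),S)-\text{Dep}^{\text{emp}}(\bm f(X),S)|$. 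Because $0\le\lambda<1$, both coefficients are bounded by $1$, so it suffices to establish a uniform concentration inequality for each of the two deviations and then union-bound.

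Next, I would expand each $\text{Dep}$ term by its definition in~\eqref{eq:def-dep}: for instance, $\text{Dep}(\bm f(X),S)=\sum_{j=1}^r\sum_{\beta_S\in\mathcal U_S}\cov^2(Z_j,\beta_S(S))$, which by the reproducing property can be written as $\sum_{j=1}^r \|\Sigma_{SX}f_j\|_{\mathcal H_S}^2$. The trace formulation used inside the proof of Lemma~\ref{lemma:emp} (sketched there and fully developed in Appendix~\ref{sec:app-lemma-emp}) expresses each squared covariance as an expectation of a product of kernel evaluations. For the empirical side, the same quantity becomes a function of the centered Gram matrices $\bm H\bm K_X\bm H$ and $\bm H\bm K_S\bm H$ and is, up to a bias of order $n^{-1}$, a U-statistic of degree four in the observed samples with a kernel bounded by $M$ times the kernel bound on $k_S$ (which is $1$ by hypothesis). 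This bias is absorbed into the $\mathcal O(1/n)$ residual term already indicated on the right-hand side.

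Then I would apply Hoeffding's inequality for bounded U-statistics~\citep{hoeffding1994probability} to each of the $r$ summands. Because $|f_j(\bm x_i)|^2\le M$ and $k_S,k_Y$ are bounded by $1$, the per-sample contribution of each squared-covariance kernel is bounded in absolute value by a constant multiple of $M$. Summing $r$ such terms and invoking Hoeffding with the standard inversion $\varepsilon=c\,M\sqrt{\log(2/\delta')/n}$ yields, with probability at least $1-\delta'$, a deviation of $rM\sqrt{\log(2/\delta')/(c^2 n)}+\mathcal O(1/n)$ for a single Dep term. The numerical constant $0.22$ arises from the sharpest Hoeffding-type constant applicable to a degree-$m$ U-statistic (involving $\lfloor n/m\rfloor$ rather than $n$ independent blocks), and the factor $6$ appearing in $\log(6/\delta)$ is produced by a union bound over three events: the two Dep concentrations (one for $Y$ and one for $S$) each applied two-sided, plus the single residual event controlling the $\mathcal O(1/n)$ bias terms, so that setting each $\delta'=\delta/3$ produces the stated $\log(6/\delta)$.

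The main obstacle is converting the pointwise-in-$\bm f$ concentration into a uniform-in-$\bm f$ statement over the whole disentanglement set $\mathcal A_r$. The key observation that removes this difficulty is that, once the representer theorem is used to parametrize $\bm f$ by $\bm \Theta$, the empirical objective $J^{\text{emp}}(\bm f,\lambda)$ depends on the samples only through the fixed matrices $\bm H\bm K_X\bm H$, $\bm H\bm K_Y\bm H$, and $\bm H\bm K_S\bm H$, so the randomness enters in a low-rank way and Hoeffding's bound on the Hilbert-space valued cross-covariance operators (as in the proof of Lemma~\ref{lemma:emp} and in~\citet{gretton2005measuring}) is \emph{already} uniform over all $\bm f$ with bounded RKHS norm. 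The boundedness hypothesis $f_j^2(\bm x_i)\le M$, together with the disentanglement constraint in~\eqref{eq:A}, certifies exactly such a uniform bound, allowing us to transfer the pointwise concentration to a uniform one without incurring additional covering-number factors. Substituting these uniform deviations back into the bound on $|\sup J - \sup J^{\text{emp}}|$ gives the claimed inequality.
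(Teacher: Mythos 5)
Your proof takes essentially the same route as the paper's: bound $\left|\sup J - \sup J^{\text{emp}}\right|$ by $\sup\left|J - J^{\text{emp}}\right|$, decompose the latter via the triangle inequality into the two $\text{Dep}$ deviations, and invoke the concentration bound already established in Lemma~\ref{lemma:emp}. The paper's proof is precisely this chain, just stated in the opposite order.

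One inaccuracy in your account concerns the provenance of the factor $6$ inside $\log(6/\delta)$. You attribute it to a union bound over three events (the $Y$ concentration, the $S$ concentration, and a bias-control event). In fact, the bias is a deterministic $\mathcal O(1/n)$ offset and consumes no probability budget, and the $6$ is already present inside Lemma~\ref{lemma:emp} for a \emph{single} $\text{Dep}$ term: the centering matrix $\bm H = \bm I - \tfrac{1}{n}\bm 1\bm 1^T$ splits $\text{Dep}^{\text{emp}}$ into three U-statistics (of degrees $2$, $3$, $4$), each controlled one-sided by Hoeffding, with the choice $a \ge 0.22$ collapsing the three exponentials into $3e^{-a^2t^2n/(r^2M^2)}$; the two-sided bound doubles this to $6$. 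Your accounting would actually be the \emph{correct} way to union-bound across the separate $Y$ and $S$ deviations, which the paper's proof of this theorem elides — it asserts each holds with probability $1-\delta$ and then concludes the combined bound at the same confidence, which, taken literally, should degrade the constant. So your bookkeeping exposes a small sloppiness in the paper, but it is not where the stated constant comes from.

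You are right that the remaining obstacle is converting the pointwise-in-$\bm f$ concentration of Lemma~\ref{lemma:emp} into a uniform bound over $\mathcal A_r$ before applying $\left|\sup A - \sup B\right| \le \sup\left|A - B\right|$. The paper does not address this either; it simply applies the lemma as if it were already uniform. Your proposed resolution — that the Hoeffding bound on the cross-covariance operators is automatically uniform because $J^{\text{emp}}$ depends on the data only through fixed Gram matrices — is not actually a proof of uniformity: the deviation $J(\bm f) - J^{\text{emp}}(\bm f)$ still depends on $\bm f$ through the weights $\bm\Theta$, and a genuine uniform bound would require either an operator-norm concentration of the empirical cross-covariance operators (which then bounds the deviation simultaneously for all $f_j$ with $f_j^2(\bm x_i) \le M$) or a covering argument over $\mathcal A_r$. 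Neither you nor the paper supplies this, so the gap is shared, but you deserve credit for flagging it explicitly.
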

\begin{proof}
This Theorem can be proved by using the results of the convergence rate of Dep$(Z, S)$ in Lemma~\ref{lemma:emp}. For detailed proof, see Appendix~\ref{sec:app-thm-emp-convg}.
\end{proof}

Note that, for any  $\bm{x}$  in the training set, $f_j(\bm x)$ can be calculated as $f_j(\bm {x})=\sum_{i=1}^n \theta_{ji} k_X(\bm{x}_i, \bm{x})$. We can assume that $k_X(\cdot, \cdot)$ is bounded. For example, in RBF Gaussian and Laplacian RKHSs, both of which are universal, $k_X(\cdot, \cdot)\le 1$. This implies that $f^2_j(\bm x)\le \sqrt{n}\|\bm{\theta}_j\|$, where $\bm{\theta}_j$ is $j$-th row of $\bm{\Theta}$ in equation~\eqref{eq:z}. One always can normalize $f_j(\bm x)$ by dividing it by the maximum of $\sqrt{n}\|\bm{\theta}_j\|$ over $j$s, or by dividing by the maximum of $|f_j(\bm {x}_i)|$ over $i$s and $j$s. Notice that this normalization is only a scalar multiplication and has no effect on the invariance of $Z=\bm{f}(X)$ to $S$ and the utility of any downstream target task predictor $g_Y(Z)$.

\subsection{Numerical Complexity\label{sec:rff}}
\noindent\textbf{Computational Complexity:} If $\bm L_X$ in~\eqref{eq:eig-emp} is provided in the training dataset, then the computational complexity of obtaining the optimal encoder is $\mathcal O(l^3)$, where $l\le n$ is the numerical rank of the Gram matrix $\bm K_X$. However, the dominating part of the computational complexity is due to the Cholesky factorization, $\bm K_X=\bm L_X \bm L_X^T$, which is $\mathcal O(n^3)$. Using random Fourier features (RFF)~\citep{rahimi2007random}, $k_X(\bm x, \bm x^\prime)$ can be approximated by $\bm r_X(\bm x)^T\, \bm r_X(\bm x^\prime)$, where $\bm r_X(\bm x)\in \mathbb R^{d}$. In this situation, the Cholesky factorization can be directly calculated as
\begin{equation}
    \bm L_X = \begin{bmatrix}
    \bm r_X(\bm x_1)^T\\
    \vdots\\
    \bm r_X(\bm x_n)^T
    \end{bmatrix}\in \mathbb R^{n\times d}.
\end{equation}
As a result, the computational complexity of obtaining the optimal encoder becomes $\mathcal O(d^3)$, where the RFF dimension, $d$, can be significantly less than the sample size $n$ with negligible error on the approximation $k_X(\bm x, \bm x^\prime)\approx\bm r_X(\bm x)^T\, \bm r_X(\bm x^\prime)$.

\noindent\textbf{Memory Complexity:}
The memory complexity of~\eqref{eq:eig-emp}, if calculated naively, is $\mathcal O(n^2)$ since $\bm K_Y$ and $\bm K_S$ are $n$ by $n$ matrices. However, using RFF together with Cholesky factorization $\bm K_Y=\bm L_Y \bm L_Y^T$, $\bm K_S=\bm L_S \bm L_S^T$, the left-hand side of~\eqref{eq:eig-emp} can be re-arranged as
\begin{equation}
    (1-\lambda) \left(\bm L^T_X \tilde{\bm L}_Y\right) \left(\tilde{\bm L}^T_Y\bm L_X\right) -\lambda \left(\bm L^T_X \tilde{\bm L}_S\right) \left(\tilde{\bm L}^T_S\bm L_X\right),
\end{equation}
where $\tilde{\bm L}^T_Y=\bm H \bm L_Y=\bm L_Y - \frac{1}{n}\bm 1_n (\bm 1_n^T\bm L_Y)$ and therefore, the required memory complexity is $\mathcal O(nd)$. Note that $\tilde{\bm L}^T_S$ and $\bm H \bm L_X$ can be calculated similarly. 

\subsection{Target Task Performance in K$-\mathcal T_{\text{Opt}}$\label{sec:Bayes}}
Assume that the desired target loss function is MSE. Then, in the following Theorem, we show that maximizing $\text{Dep}\left( \bm f(X), Y\right)$ over $\bm f\in\mathcal A_r$ can learn a representation $Z$ that is informative enough for a target predictor on $Z$ to achieve the most optimal estimation, i.e., the Bayes estimator ($\mathbb E[Y|\,X]$).
\begin{theorem}\label{thm:bayes}
Let $\bm f^*$ be the optimal encoder by maximizing Dep$(\bm f(X), Y)$, where $\gamma\rightarrow 0$ and $\mathcal H_Y$ is a linear RKHS. Then, there exist $\bm W\in \mathbb R^{d_Y\times r}$ and $\bm b \in \mathbb R^{d_Y}$ such that $\bm W\bm f^*(X)+\bm b$ is the Bayes estimator, i.e., 
\begin{eqnarray}
\mathbb E_{X, Y}\left[\|\bm W \bm f^*(X)+\bm b-Y\|^2\right] &=&\inf_{h \text{ is Borel}}\mathbb E_{X, Y} \left[\|h(X)-Y\|^2\right]\nn\\ &=& \mathbb E_{X, Y} \left[\|\mathbb E[Y|\,X]-Y\|^2\right].\nn
\end{eqnarray}
\end{theorem}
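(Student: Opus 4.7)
The plan is to combine three ingredients: the linearity of $\mathcal H_Y$, which collapses $\text{Dep}(\bm f(X),Y)$ to the squared Frobenius norm of a cross-covariance matrix; the tower property, which rewrites that cross-covariance in terms of the regression function $\bm m(X):=\mathbb E[Y\,|\,X]$; and the observation that as $\gamma\to 0$ the disentanglement constraint in $\mathcal A_r$ forces the coordinates of $\bm f(X)$ to be orthonormal with respect to the $L^2(P_X)$-covariance inner product. Once these pieces are in place, the supremum in~\eqref{eq:main} (with $\lambda=0$) becomes a reduced-rank-regression / supervised-PCA problem whose optimum has the same $L^2(P_X)$-span as the coordinates of $\bm m$, so $\bm m$ itself is an affine function of $\bm f^*$.

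Carrying this out, I would first choose $\{y\mapsto e_k^{T}y\}_{k=1}^{d_Y}$ as an orthonormal basis of the linear RKHS $\mathcal H_Y$ and substitute into~\eqref{eq:def-dep} to obtain $\text{Dep}(\bm f(X),Y)=\sum_{j=1}^{r}\sum_{k=1}^{d_Y}\cov^{2}(f_j(X),Y_k)$. Iterated expectations then give $\cov(f_j(X),Y_k)=\cov(f_j(X),m_k(X))$. Letting $\gamma\to 0$ in the definition of $\mathcal A_r$ leaves $\cov(f_i(X),f_j(X))=\delta_{ij}$, so the maximization of $\text{Dep}(\bm f(X),Y)$ rewrites as
$$\sup\Bigl\{\sum_{j=1}^{r}\|P_{\mathcal M}\tilde f_j\|_{\Sigma}^{2}\,:\,\langle\tilde f_i,\tilde f_j\rangle_{\Sigma}=\delta_{ij}\Bigr\},$$
where $\tilde f_j:=f_j-\mathbb E f_j$, $\langle\cdot,\cdot\rangle_{\Sigma}$ denotes the covariance (equivalently, centered $L^2(P_X)$) inner product, and $\mathcal M\subset L^2(P_X)$ is the closure of $\text{span}\{m_k-\mathbb E m_k\}_{k=1}^{d_Y}$. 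This is a Ky-Fan type trace-maximization whose optimum is attained by any $\Sigma$-orthonormal family $\{\tilde f_j^{*}\}$ that exhausts $\mathcal M$. Consequently each $m_k-\mathbb E m_k$ is a linear combination of $\{\tilde f_j^{*}\}$, yielding $\bm W\in\mathbb R^{d_Y\times r}$ and $\bm b\in\mathbb R^{d_Y}$ with $\bm m(X)=\bm W\bm f^{*}(X)+\bm b$ almost surely. Since $\bm m(X)=\mathbb E[Y\,|\,X]$ is the Bayes estimator under MSE, the two equalities displayed in the theorem statement follow immediately.

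The hard part is the limit $\gamma\to 0$ together with the passage from $\mathcal H_X$ to $L^2(P_X)$: I need to argue that the top-$r$ eigenfunctions of the regularized operator $(\Sigma_{XX}+\gamma I_X)^{-1}\Sigma_{YX}^{*}\Sigma_{YX}$ produced by Theorem~\ref{thm:main} at $\lambda=0$ converge, as $\gamma\to 0$, to a $\Sigma$-orthonormal family whose $L^2$-span coincides with $\mathcal M$. This rests on the universality of $k_X$ (giving $L^2(P_X)$-density of $\mathcal H_X$, so that every $m_k$ is approximable within the feasible set) and on a standard continuity-of-spectrum argument for the family of positive self-adjoint operators indexed by $\gamma$. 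A secondary caveat is that $r$ must be at least $\dim\mathcal M$ (certainly so if $r\ge d_Y$); otherwise the same argument still produces the best rank-$r$ affine $L^2$-approximation of $\bm m$ rather than $\bm m$ exactly, and the conclusion has to be read in that weaker sense.
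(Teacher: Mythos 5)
Your route is genuinely different from the paper's and is, modulo the caveats you yourself flag, sound. The paper argues in the \emph{opposite} direction: it starts from the least-squares problem $\min_{\bm f,\bm W,\bm b}\frac{1}{n}\|\bm W\bm f(\bm x_i)+\bm b-\bm y_i\|^2$, eliminates $\bm b$ and then $\bm W$ in closed form, and shows the residual optimization over $\bm f\in\mathcal A_r$ is \emph{literally} the empirical generalized eigenvalue problem~\eqref{eq:eig-emp} with $\lambda=0$, linear $\mathcal H_Y$, and $\gamma\to 0$; it then invokes universality of $k_X$ plus $r\ge d_Y$ so that the composite $\bm W\bm f+\bm b$ can reach $\mathbb E[Y|X]$. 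You instead stay at the population level, use the tower property to replace $Y_k$ by the regression function $m_k(X)=\mathbb E[Y_k|X]$ inside the covariance, and argue that the Ky-Fan maximizer's $L^2(P_X)$-span must contain $\mathcal M=\mathrm{span}\{\tilde m_k\}$. This buys you a cleaner conceptual picture (Dep$(\bm f(X),Y)$ is exactly the trace of the compression of $T=\sum_k \tilde m_k\otimes\tilde m_k$ onto $\mathrm{span}\{\tilde f_j\}$) and, to your credit, makes the $\gamma\to 0$ passage and the $r\ge\dim\mathcal M$ requirement explicit where the paper leaves them implicit.

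Two small corrections. First, your intermediate display rewrites the objective as $\sum_j\|P_{\mathcal M}\tilde f_j\|_\Sigma^2$, but that identity needs $\{\tilde m_k\}$ to be $\Sigma$-orthonormal, which it generally is not; the correct form is $\sum_j\langle \tilde f_j,\,T\tilde f_j\rangle_\Sigma$ with $T=\sum_k\tilde m_k\otimes\tilde m_k$. This does not change the conclusion: $T$ is self-adjoint, positive, of rank $\dim\mathcal M$, with range $\mathcal M$, so for $r\ge\dim\mathcal M$ the supremum is $\mathrm{Tr}(T)$ and is attained exactly when $\mathrm{span}\{\tilde f_j\}\supseteq\mathcal M$ — which is what you use. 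Second, the $\gamma\to 0$ limit you flag as ``the hard part'' is also elided in the paper's own proof (it simply matches the eigenvalue problem to the one in Theorem~\ref{thm:main-emp} at $\gamma\to 0$ and invokes density of $\mathcal H_X$ in $L^2$); so your proof is no less rigorous than the original on that point, and is arguably more honest about where the argument is only heuristic.
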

\begin{proof}
This Theorem corresponds to a linear least square target predictor on top of a kernelized encoder with a solution(s) similar to supervised kernel-PCA. Recall from Remark~\ref{remark} that if $\lambda=0$, optimizing Dep$(f(X), Y)$ will result in a supervised kernel-PCA. See Appendix~\ref{sec:app-Bayes} for formal proof.   
\end{proof}
This Theorem implies that not only can $\text{Dep}\left( \bm f(X), Y\right)$ preserve all the necessary information in $Z$ to predict $Y$ optimally, the learned representation is simple enough for a linear regressor to achieve optimal performance.
\section{Experiments\label{sec:experiment}}

In this section, we numerically quantify our K$-\mathcal T_{\text{Opt}}$ through the closed-form solution for the encoder obtained in Section~\ref{sec:exact-trade-offs} on an illustrative toy example and two real-world datasets, Folktables and CelebA.

\subsection{Baselines}
We consider two types of baselines: (1) ARL (the main framework for IRepL) with MSE or Cross-Entropy as the adversarial loss. Such methods are expected to fail to learn a fully invariant representation~\citep{adeli2021representation,grari2020learning}. These include ~\citep{xie2017controllable, zhang2018mitigating, madras2018learning}, and SARL~\citep{sadeghi2019global}. Among these baselines, except for SARL, all baselines are optimized via iterative minimax optimization, which is often unstable and not guaranteed to converge. On the other hand, SARL obtains a closed-form solution for the global optima of the minimax optimization under a linear dependence measure between $Z$ and $S$, which may fail to capture all modes of dependence between $Z$ and $S$. (2) HSIC-based adversarial loss that accounts for all modes of dependence, and as such, is theoretically expected to learn a fully invariant representation~\citep{quadrianto2019discovering}. However, since stochastic gradient descent is used for learning, it lacks convergence guarantees to the global optima.

\subsection{Datasets\label{sec:dataset}}
\noindent\textbf{Gaussian Toy Example:} We design an illustrative toy example where $X$ and $S$ are mean independent in some dimensions but not fully independent in those dimensions. Specifically, $X$ and $S$ are $4$-dimensional continuous RVs and generated as follows,
\begin{eqnarray}\label{eq:gaus-X-S}
&&U = [U_1, U_2, U_3, U_4] \sim \mathcal N\left(\bm 0_4, \bm I_4\right), \quad
N\sim \mathcal N\left(\bm 0_4, \bm I_4\right), \quad U\indep N \nn\\
&&X = \cos\left(\frac{\pi}{6}\,U\right)+0.005N, \ \,
S=\left[\sin\left(\frac{\pi}{6}\,[U_1, U_2]\right), \cos\left(\frac{\pi}{6}\,[U_3, U_4]\right)\right],
\end{eqnarray}
where $\sin(\cdot)$ and $\cos(\cdot)$ are applied point-wise.
To generate the target attribute, we define four binary RVs as follows.
\begin{equation}
    Y_i = \bm 1_{\left\{|U_i|>T\right\}}(U_i), \ \  i=1,2,3,4\nn,
\end{equation}
where $\bm 1_{\mathcal B}(\cdot)$ is the indicator function, and we set $T=0.6744$, so it holds that $\mathbb P[Y_i=0]=\mathbb P[Y_i=1]=0.5$ for $i=1,2,3,4$.
Finally, we define $Y$ as a $16$-class categorical RV concatenated by $ Y_i$s.
Since $S$ is dependent on $X$ through all the dimensions of $X$, a wholly invariant $Z$ (i.e., $Z\indep S$) should not contain any information about $X$. However, since $[S_1, S_2]$ is only mean independent of $[X_1, X_2]$ (i.e., $\mathbb E\left[S_1, S_2\big|\,X_1, X_2\right]=\mathbb E\left[S_1, S_2\right]$), ARL baselines with MSE as the adversary loss, i.e., ~\citet{xie2017controllable, zhang2018mitigating, madras2018learning} and SARL cannot capture the dependency of $Z$ to $[S_1, S_2]$ and result in a representation that is always dependent on $[S_1, S_2]$ (see Section~\ref{sec:app-mse} for theoretical details). We sample $18,000$ instances from $\bm{p}_{X, Y, S}$ independently and split these samples equally into training, validation, and testing partitions.

\noindent\textbf{Folktables:} We consider a fair representation learning task on Folktables~\citep{ding2021retiring} dataset (a derivation of the US census data). Specifically, we consider 2018-WA (Washington) and 2018-NY (New York) census data where the target attribute $Y$ is the employment status (binary for WA and $4$ categories for NY and the semantic attribute $S$ is age (discrete value between $0$ and $95$ years). We seek to learn a representation that predicts employment status while being fair in demographic parity (DP) w.r.t. age. DP requires that the prediction $\widehat Y$ be independent of $S$, which can be achieved by enforcing $Z\indep S$. The WA and NY datasets contain $76,225$ and $196,967$ samples, each constructed from $16$ different features. We randomly split the data into training ($70\%$), validation ($15\%$), and testing ($15\%$) partitions. Further, we adopt embeddings for categorical features (learned in a supervised fashion to predict $Y$) and normalization for continuous/discrete features (dividing by the maximum value).

\noindent\textbf{CelebA:}
\label{sec:append-celebA}
CelebA dataset~\citep{liu2015deep} contains $202, 599$ face images of $10, 177$ different celebrities with standard training, validation, and testing splits. Each image is annotated with $40$ different attributes. We choose the target attribute $Y$ as the high cheekbone attribute (binary) and the semantic attribute $S$ as the concatenation of gender and age (a $4$-class categorical RV). The objective of this experiment is similar to that of Folktables. Since raw image data is not appropriate for kernel methods, we pre-train a ResNet-18~\citep{he2016deep} (supervised to predict $Y$) on CelebA images and extract features of dimension $256$. These features are used as the input data for all methods.

\subsection{Evaluation Metrics} We use the accuracy of the classification tasks ($16$-class classification for Gaussian toy example, employment prediction for Folktables, and high cheekbone prediction for CelebA) as the utility metric. For Folktables and CelebA datasets, we define DP violation as 
\begin{eqnarray}\label{eq:dpv}
\text{DPV}(\widehat Y, S) := \mathbb E_{\widehat Y}\left[\var_S\left(\mathbb P[\widehat Y|\, S]\right)\right]
\end{eqnarray}
and use it as a metric to measure the variance (unfairness) of the prediction $\widehat Y$ w.r.t. the semantic attribute $S$.
For the Gaussian toy example, the above metric is not suitable since $S$ is a continuous RV. To circumvent this difficulty, we employ KCC~\citep{bach2002kernel}
\begin{eqnarray}\label{eq:kcc}
\text{KCC}(Z, S):= \sup_{\alpha \in \mathcal H_Z, \beta \in \mathcal H_S}\frac{\cov(\alpha(Z), \beta(S))}{\sqrt{\var(\alpha(Z))\,\var(\beta(S))}},
\end{eqnarray}
as a measure of invariance of $Z$ to $S$, where $\mathcal H_Z$ and $\mathcal H_S$ are RBF-Gaussian RKHS.
The reason for using KCC instead of HSIC is that, unlike HSIC, KCC is normalized. Therefore it is a more readily interpretable measure for comparing the invariance of representations between different methods.
\subsection{Choice of $(Y,S)$ Pair}
The existence of a utility-invariance trade-off ultimately depends on the statistical dependency between target and semantic attributes. If Dep$(Z, S)$ is negligible, a trade-off does not exist. Keeping this in mind, we first chose the semantic attribute to be a sensitive attribute for Folktables (i.e., age) and CelebA (i.e., concatenation of age and gender) datasets. Then, we calculated the data imbalance (i.e., $\left|\mathbb P[Y=0]-0.5\right|$) and KCC$(Y, S)$ for all possible $Y$s. Finally, we chose $Y$ with a small data imbalance and a moderate KCC$(Y, S)$. For Folktables dataset, $\left|\mathbb P[\text{employment}=0]-0.5\right|=0.04$ and KCC$(\text{employment}, \text{age})=0.4$. For CelebA dataset, $\left|\mathbb P[\text{high cheekbone}=0]-0.5\right|=0.05$ and KCC$(\text{high cheekbone}, [\text{age}, \text{gender}])=0.1$.
\subsection{Implementation Details}
 For all methods, we pick different values of $\lambda$ ($100 \ \lambda$s for the Gaussian toy example and $70 \ \lambda$s for Folktables and CelebA datasets) between zero and one for obtaining the utility-invariance trade-off. We train the baselines that use a neural network for encoder five times with different random seeds. We let the random seed also change the training-validation-testing split for the Folktables dataset (CelebA and Gaussian datasets have fixed splits).

\noindent\textbf{Embedding Dimensionality}: None of the baseline methods have any strategy to find the optimum embedding dimensionality ($r$), and they all set $r$ to a constant w.r.t. $\lambda$. Therefore, for baseline methods, we set $r=15$ (i.e., the minimum dimensionality required to classify $16$ different categories linearly) for the Gaussian toy example and $r=3$ (i.e., the minimum dimensionality required to classify $4$ different categories linearly) for Folktables-NY dataset, that is also equal to $r^{\text{Opt}}$ when $\lambda=0$. For K-$\mathcal T_{\text{Opt}}$, we use $r^{\text{Opt}}(\lambda)$ in Corollary~\ref{corrolary:r}. See Figure~\ref{fig:exp-embedding-dim} for the plot of $r^{\text{Opt}}$ versus $\lambda$ for the toy Gaussian and Folktables-NY datasets. For Folktables-WA and CelebA datasets,  $r^{\text{Opt}}(\lambda=0)$ is equal to one, and therefore we let $r=1$ for all methods and all $0\le\lambda<1$.

\begin{figure}[t]
\centering
\includegraphics[width=0.24\textwidth]{./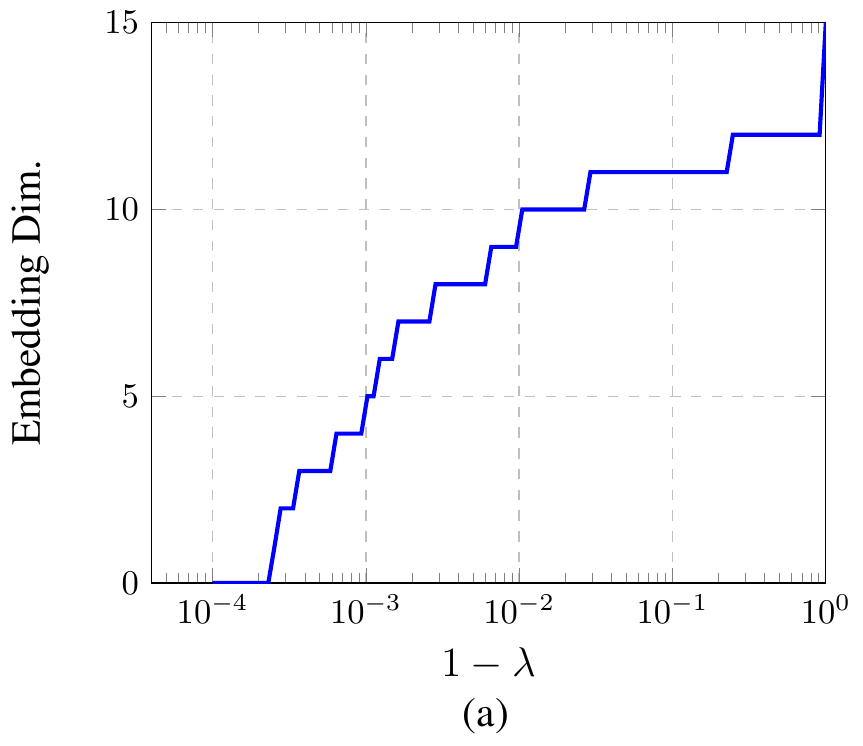}
\hspace{5mm}
\includegraphics[width=0.24\textwidth]{./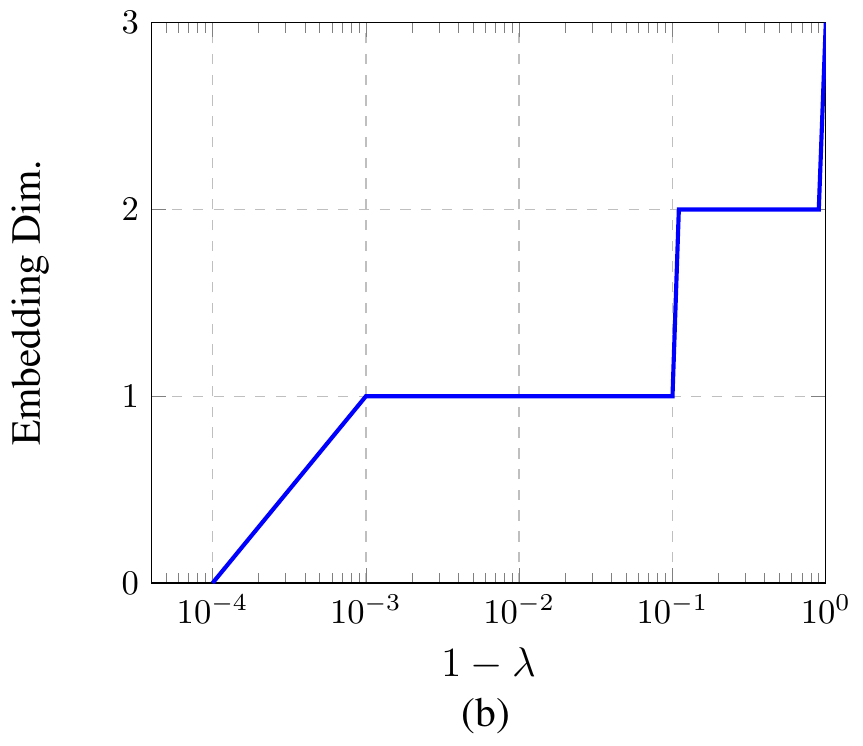}
\caption{ Plots of $r^{\text{Opt}}(\lambda)$ versus the dependence trade-off parameter $1-\lambda$ for (a) the Gaussian toy dataset and (b) Folktables-NY dataset. There is a non-decreasing relation between $r^{\text{Opt}}(\lambda)$ and $1-\lambda$.}
\label{fig:exp-embedding-dim}
\end{figure}

\noindent\textbf{K$-\mathcal T_{\text{Opt}}$ (Ours)}:
We let $\mathcal H_{X}$, $\mathcal H_{S}$, and $\mathcal H_{Y}$ be RBF Gaussian RKHS, where we compute the corresponding band-widths (i.e., $\sigma$s) using the median strategy introduced by~\citet{gretton2007kernel}. We optimize the regularization parameter $\gamma$ in the disentanglement set~\eqref{eq:A} by minimizing the corresponding target losses over $\gamma$s in $\{10^{-6}, 10^{-5}, 10^{-4}, 10^{-3}, 10^{-2}, 10^{-1}, 1\}$ on validation sets. RFF (as discussed in Section~\ref{sec:rff}) is adopted for all datasets. For RFF dimensionality, we started with a small value. Then, we gradually increased it until we reached the maximum possible performance for $\lambda=0$ (i.e., the standard unconstrained representation learning) on the corresponding validation sets. Through this process, the final RFF dimensionality is $100$ for the Gaussian dataset, $5000$ for the Folktables dataset, and $1000$ for the CelebA dataset.

\textbf{SARL~\citep{sadeghi2019global}}:
SARL method is similar to our K$-\mathcal T_{\text{Opt}}$ except that $\mathcal H_Y$ and $\mathcal H_S$ are linear RKHSs, and therefore we set $\sigma_X$ and $\gamma$ similar to that of K$-\mathcal T_{\text{Opt}}$.

\textbf{ARL~\citep{xie2017controllable, zhang2018mitigating, madras2018learning}}:
 The representation $Z=\bm f(X)$ is extracted via the encoder $\bm f$, which is an MLP ($4$ hidden layers and $15$, $15$ neurons for Gaussian data; $3$ hidden
layers and $128$, $64$ neurons for Folktables and CelebA datasets). The architectural choices were based on starting with a single linear layer and gradually increasing the number of layers and neurons until over-fitting was observed. This results in the number of encoder parameters for the Gaussian toy example to be $735$, while K$-\mathcal T_{\text{Opt}}$ has $100=100*r^{\text{Opt}}(\lambda\rightarrow 1)\le100*r^{\text{Opt}}(\lambda)\le100*r^{\text{Opt}}(\lambda=0)=1500$. For Folktables and CelebA, number of parameters is $41,024$ and $15,616$, respectively, for ARL and $5000$ and $1000$ for K$-\mathcal T_{\text{Opt}}$.
The representation $Z$ is then fed to a target task predictor $g_Y$ and a proxy adversary $g_S$, both of which are MLPs with $2$ hidden layers with  $16$ neurons for Gaussian data and $2$ hidden layers with $128$ neurons for Folktables and CelebA. All involved networks ($\bm f, g_Y, g_S$) are trained end-to-end.  We use stochastic gradient descent-ascent (SGDA)~\cite{xie2017controllable}  with AdamW~\citep{loshchilov2017decoupled} as an optimizer to alternately train the encoder, target predictor, and proxy adversary. We use a batch size of $500$ for Gaussian data; and $128$ for Folktables and CelebA.
Then, the corresponding learning rates are optimized over $\big\{10^{-2}, 10^{-3}, 5\times10^{-4}, 10^{-4}, 10^{-5}\big\}$ by minimizing the target loss on the corresponding validation sets.

\textbf{HSIC-IRepL~\citep{quadrianto2019discovering}}: 
This method can be formulated as~\eqref{eq:data} where $\text{Dep}(Z, S)$ is replaced by $\text{HSIC}(Z, S)$. The encoder and target predictor networks have the same architecture as ARL. And we use stochastic gradient descent to train the involved neural networks.
\subsection{Results}
\begin{figure}[t!]
\centering
\includegraphics[width=0.23\textwidth]{./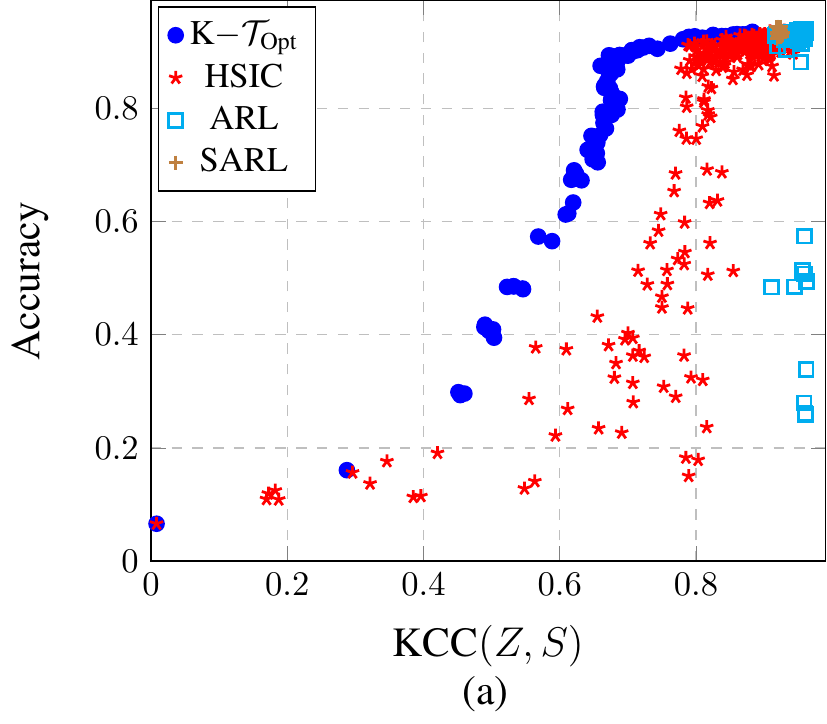}
\hspace{2mm}
\includegraphics[width=0.23\textwidth]{./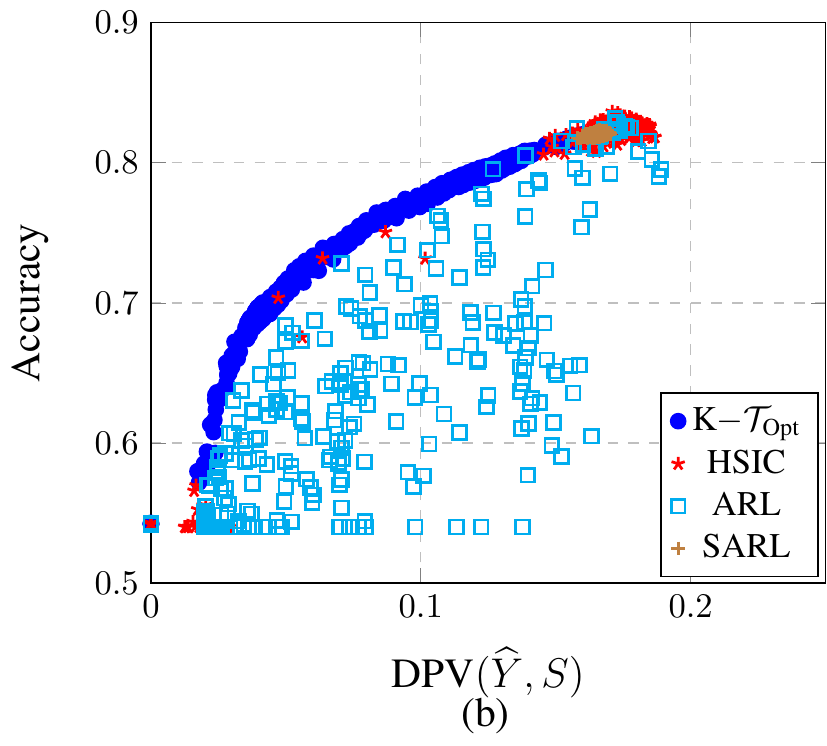}
\hspace{2mm}
\includegraphics[width=0.23\textwidth]{./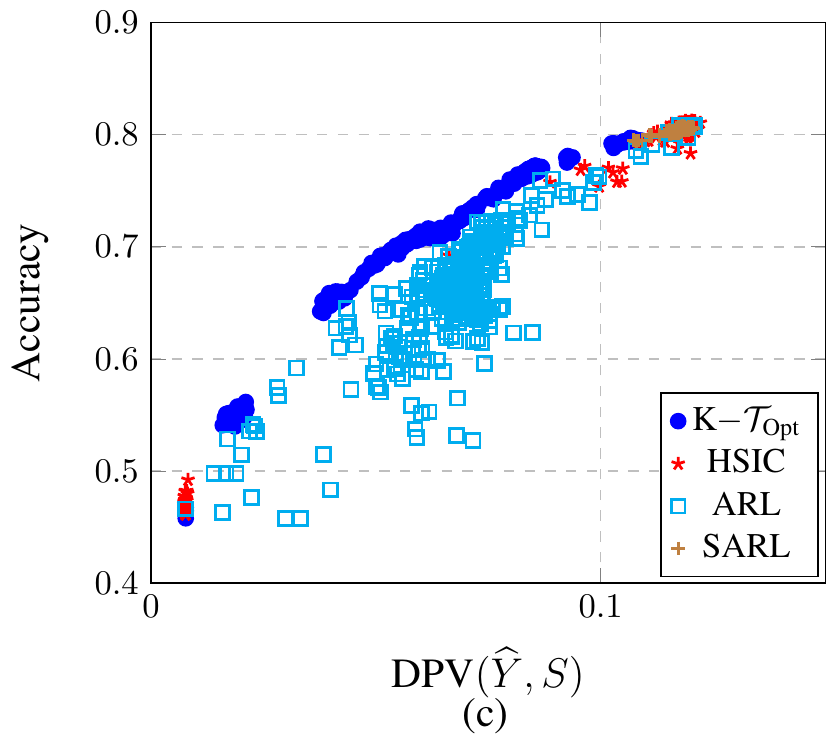}
\hspace{2mm}
\includegraphics[width=0.23\textwidth]{./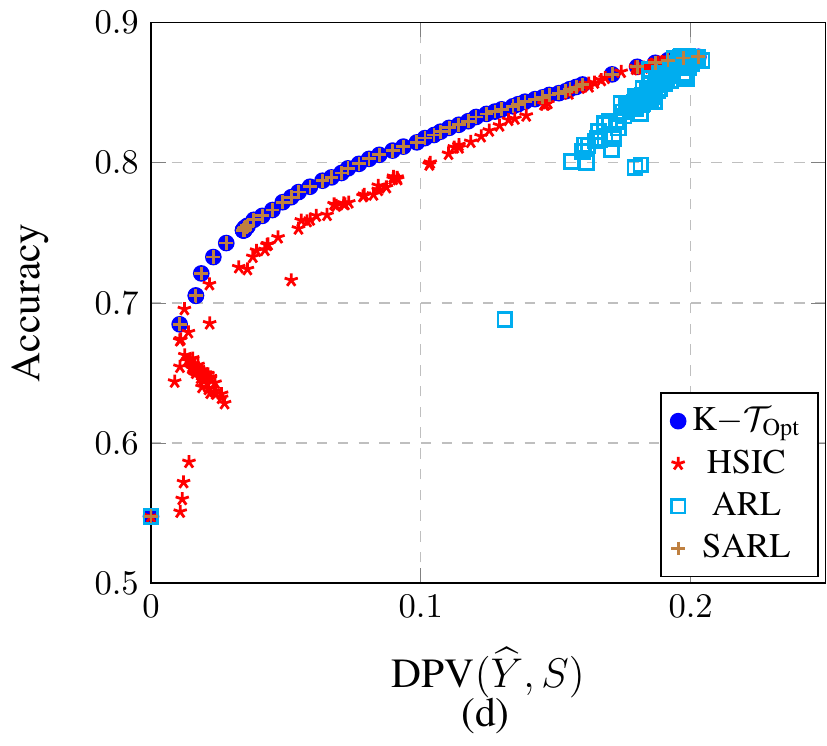}
\caption{Utility versus invariance trade-offs obtained by K$-\mathcal T_{\text{Opt}}$ and other baselines for (a) Gaussian, (b) Folktables-WA, (c) Folktables-NY, and (d) CelebA datasets. K-$\mathcal T_{\text{Opt}}$ stably spans the entire trade-off front and considerably dominates other methods for all datasets. (a) ARL and SARL~\cite{sadeghi2019global} span a small portion of the trade-off front since $S$ is mean independent (but not fully independent) of $X$ in some dimensions for the Gaussian toy example. Despite using a universal dependence measure, HSIC-IRepL~\cite{quadrianto2019discovering} performs sub-optimally due to the lack of convergence guarantees to the global optima.  \label{fig:exp-invariance-utility}}
\end{figure}
\begin{figure}[t!]
    \centering
\includegraphics[width=0.23\textwidth]{./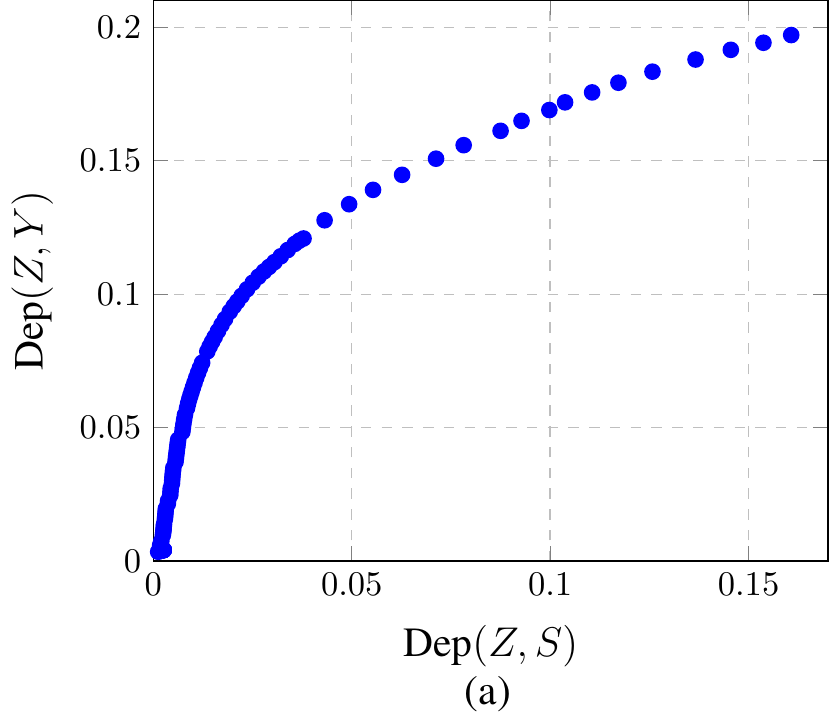}
\hspace{2mm}
\includegraphics[width=0.24\textwidth]{./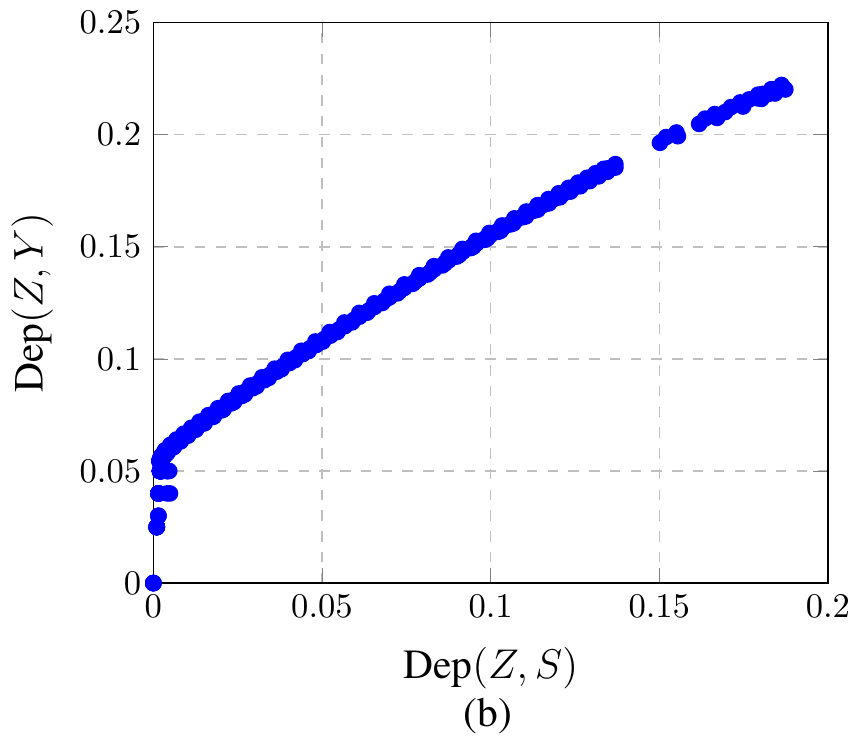}
\hspace{2mm}
\includegraphics[width=0.23\textwidth]{./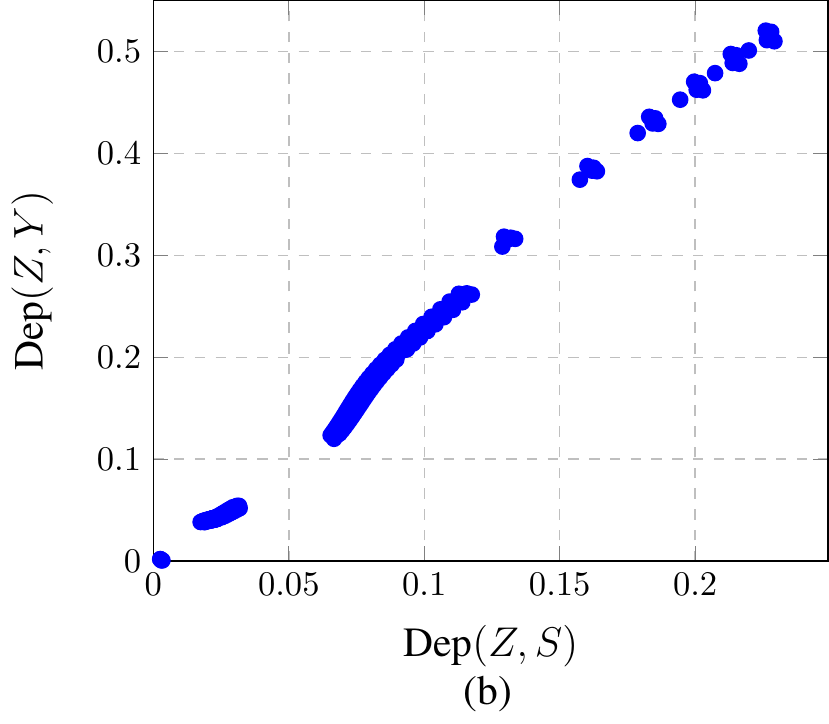}
\hspace{2mm}
\includegraphics[width=0.23\textwidth]{./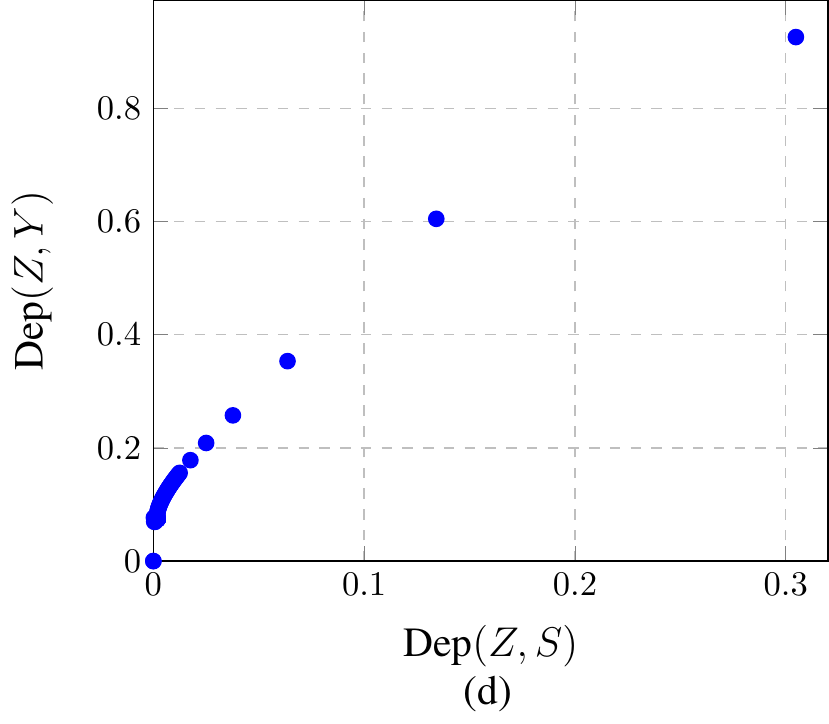}
\caption{Dep$(Z, Y)$ versus Dep$(Z, S)$ in K$-\mathcal T_{\text{Opt}}$ for (a) Gaussian, (b) Folktables-WA, (c) Folktables-NY, and (d) CelebA datasets. We can observe that there is the same trend in Dep$(Z, Y)$-Dep$(Z, S)$ trade-off as utility-invariance-trade-off in Figure~\ref{fig:exp-invariance-utility}.}
    \label{fig:exp-deps-depy}
\end{figure}

\noindent\textbf{Utility-Invariance Trade-offs}: Figures~\ref{fig:exp-invariance-utility} and ~\ref{fig:exp-deps-depy} show the utility-invariance and Dep$(Z, Y)$-Dep$(Z, S)$ trade-offs for the toy Gaussian, Folktables-WA, Folktables-NY, and CelebA datasets. The invariance measure for the Gaussian toy example is KCC~\eqref{eq:kcc}, and the invariance measure for Folktables and CelebA datasets is the fairness measure, DPV~\eqref{eq:dpv}.
We make the following observations:
 1) K$-\mathcal T_{\text{Opt}}$ is highly stable and almost spans the entire trade-off front for all datasets except Folktables-NY, which can be due to the inability of scalarized single-objective formulation in~\eqref{eq:data}, in contrast to the constrained optimization in~\eqref{eq:multi-objective}, to find all Pareto-optimal points.
 2) There is almost the same trend in the trade-off between Dep$(Z, Y)$ and Dep$(Z, S)$ (Figure~\ref{fig:exp-deps-depy}) as the utility-invariance  trade-off (Figure~\ref{fig:exp-invariance-utility}). This is a desirable observation since Dep$(Z, Y)$-Dep$(Z, S)$ trade-off is what we optimized in~\eqref{eq:main} as a surrogate to utility-invariance trade-off.
 3) The baseline method HSIC-IRepL, despite using a universal dependence measure, leads to a sub-optimal trade-off front due to the lack of convergence guarantees to the global optima.
 4) The baselines, ARL and SARL, span only a small portion of the trade-off front in the Gaussian toy example since some dimensions of the semantic attribute $S$ in~\eqref{eq:gaus-X-S} are mean independent (but not entirely independent) to some dimensions of $X$. Therefore the adversary does not provide any information to the encoder to discard $\left[S_1, S_2\right]$ from the representation. Moreover, in this dataset, ARL and SARL baselines do not approach $Z \indep S$, i.e., KCC$(Z, S)=0$ cannot be attained for any value of the trade-off parameter $\lambda$.
 5) ARL shows high deviation on the Folktables dataset due to the unstable nature of the minimax optimization.
 6) SARL performs as well as our K$-\mathcal T_{\text{Opt}}$ for CelebA dataset. This is because both $S$ and $Y$ are categorical for the CelebA dataset, and therefore linear RKHS on one-hot encoded attribute performs just as well as universal RKHSs~\citep{li2021self}.

\begin{figure}[t]
    \centering
\includegraphics[width=0.23\textwidth]{./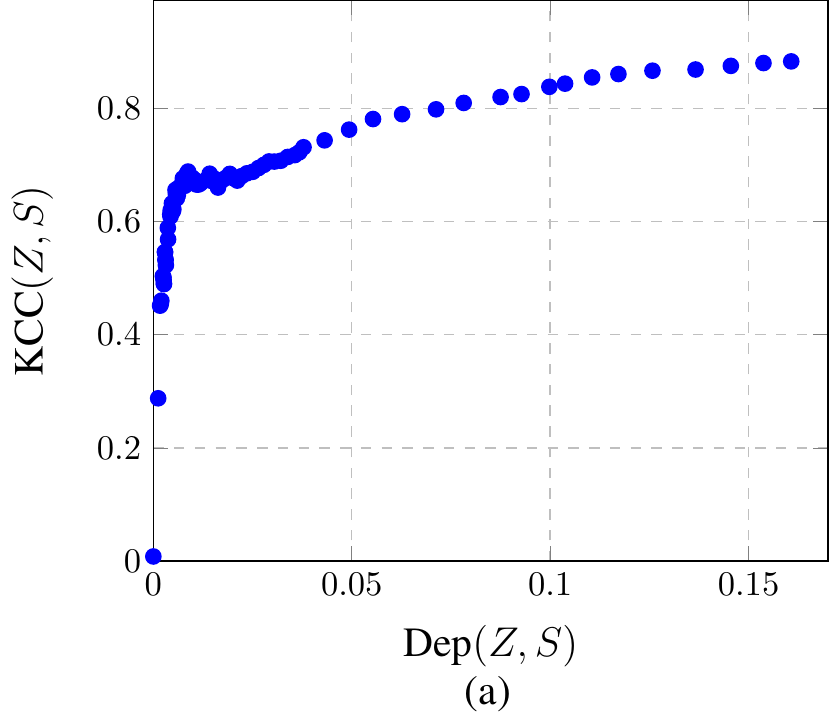}
\hspace{2mm}
\includegraphics[width=0.24\textwidth]{./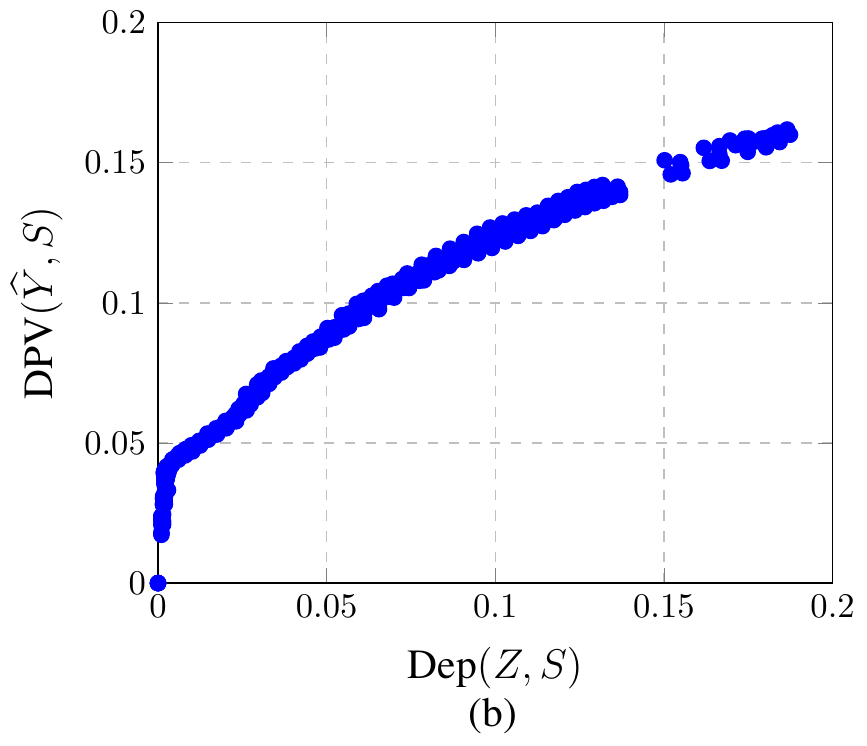}
\hspace{2mm}
\includegraphics[width=0.23\textwidth]{./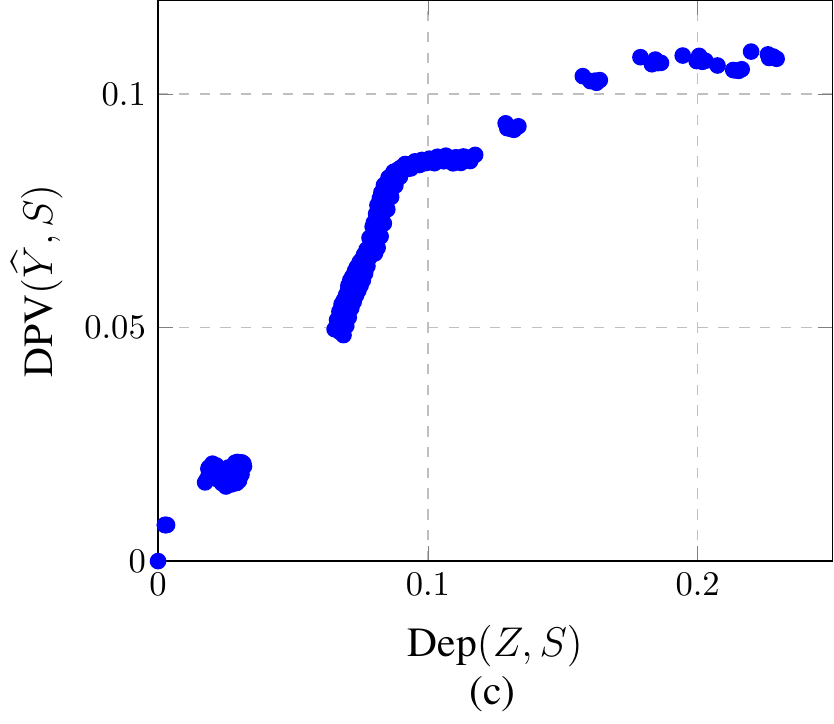}
\hspace{2mm}
\includegraphics[width=0.23\textwidth]{./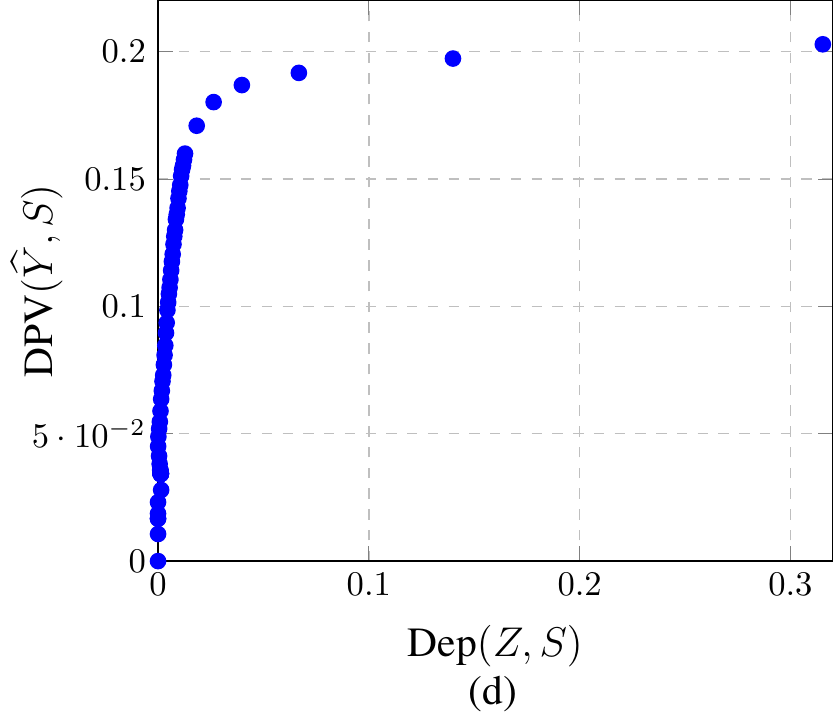}
    \caption{Invariance versus Dep$(Z, S)$ of K$-\mathcal T_{\text{Opt}}$ for (a) Gaussian, (b) Folktables-WA, (c) Folktables-NY, and (d) CelebA datasets. Dep$(Z, S)$ enjoys a monotonic relation with the underlying invariance measures.}
    \label{fig:exp-dep-invariance}
\end{figure}
\noindent\textbf{Universality of $\text{Dep}(Z, S)$}: We empirically examine the practical validity of our assumption in Section~\ref{sec:dependence-measure} and verify if our dependence measure $\text{Dep}(Z, S)$, defined in~\eqref{eq:def-dep}, can capture all modes of dependency between $Z$ and $S$. Figure~\ref{fig:exp-dep-invariance} (a) shows the plot of the universal dependence measure KCC$(Z, S)$ versus Dep$(Z, S)$ for the Gaussian dataset and Figures~\ref{fig:exp-dep-invariance} (b, c) illustrate the relationship between DPV$(\widehat Y, S)$ and Dep$(Z, S)$ for Folktables and CelebA datasets, respectively.
We observe a non-decreasing relation between the corresponding invariance measures and Dep$(Z, S)$. More importantly, as $\text{KCC}(Z, S)\rightarrow 0$ (or $\text{DPV}(\widehat Y, S)\rightarrow 0$) so does $\text{Dep}(Z, S)$. These observations verify that $\text{Dep}(Z, S)$ accounts for all modes of dependence between $Z$ and $S$.

\subsection{Ablation Study\label{sec:ablation}}

\noindent\textbf{Effect of Embedding Dimensionality:} 
In this experiment, we examine the significance of the embedding dimensionality, $r^{\text{Opt}}(\lambda)$, discussed in Corollary~\ref{corrolary:r}. We obtain the utility-invariance trade-off when the embedding dimensionality is fixed to $r=r^{\text{Opt}}(\lambda=0)=15$.  A comparison between the utility-invariance trade-off induced by $r^{\text{Opt}}(\lambda)$ and the fixed $r=15$ is illustrated in Figure~\ref{fig:exp-gaussian-embedding-dim} (a). We observe that not only the utility-invariance  trade-off for fixed $r$ is dominated by that of $r^{\text{Opt}}(\lambda)$, but also, using fixed $r$ is unable to achieve the total invariance representation, i.e., $Z\indep S$. Further, some of the largest eigenvalues of~\eqref{eq:eig-emp} versus the invariance trade-off parameter $\lambda$ are plotted in Figure~\ref{fig:exp-gaussian-embedding-dim} (b). We recall from Corollary~\ref{corrolary:r} that, for any given $\lambda$, $r^{\text{Opt}}$ is the number of non-negative eigenvalues of~\eqref{eq:eig-emp}.
\begin{figure}[]
\centering
\includegraphics[width=0.23\textwidth]{./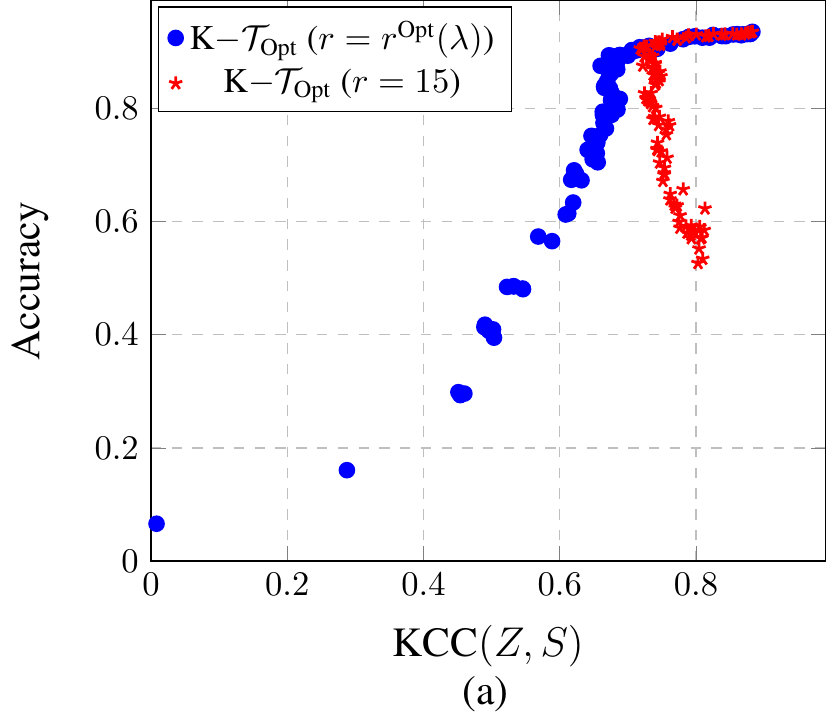}
\hspace{2mm}
\includegraphics[width=0.24\textwidth]{./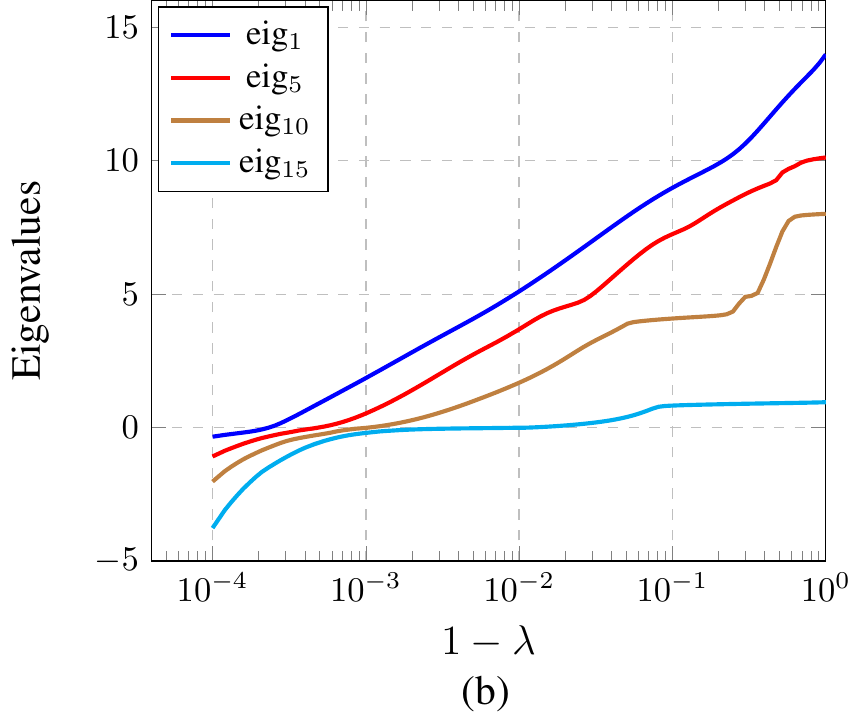}
\hspace{2mm}
\includegraphics[width=0.23\textwidth]{./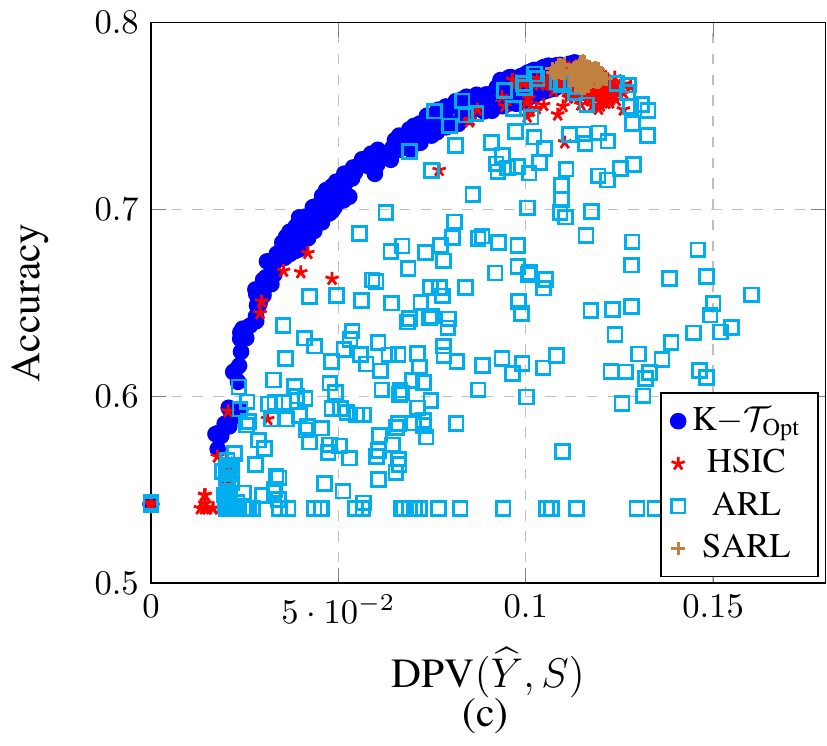}
\hspace{2mm}
\includegraphics[width=0.23\textwidth]{./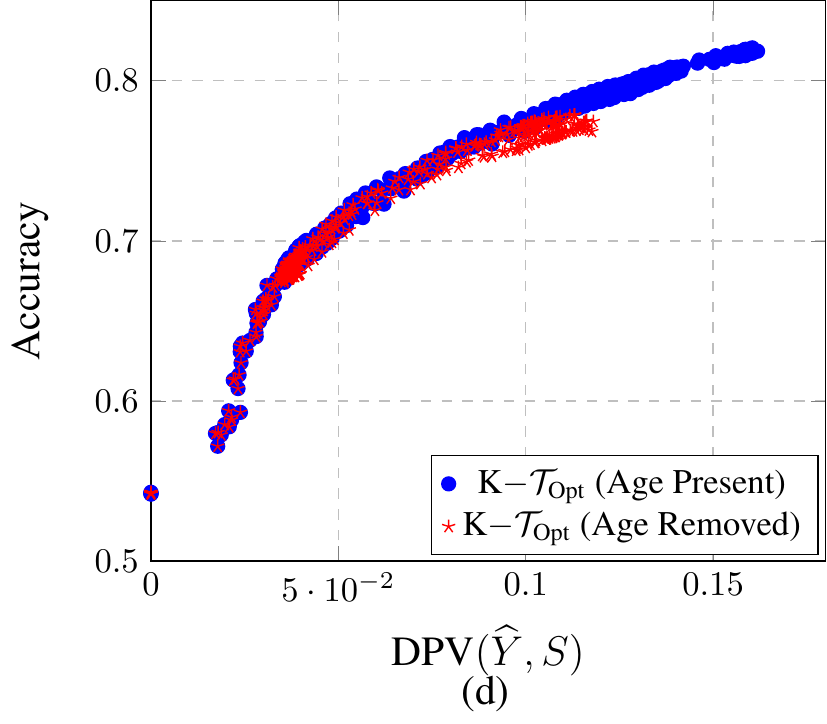}
\caption{(a) Comparison between the utility-invariance trade-offs induced by the optimal embedding dimensionality $r^{\text{Opt}}(\lambda)$ and that of fixed $r=15$. Fixed $r=15$ is significantly dominated by that of $r^{\text{Opt}}(\lambda)$ and fails to attain $Z\indep S$. (b) The first, fifth, tenth, and fifteenth largest eigenvalues in~\eqref{eq:eig-emp} versus $1-\lambda$. Given $\lambda$, $r^{\text{Opt}}$ is equal to the number of non-negative eigenvalues. As $1-\lambda$ decreases, the largest eigenvalues approach negative numbers. (c) Utility versus invariance trade-offs for all methods when age (i.e., the sensitive attribute) is discarded from the input data. (d) A comparison between trade-offs of K$-\mathcal T_{\text{Opt}}$ when age is present versus age is discarded from the input data. Removing the age attribute slightly degrades the trade-off due to information discarding.}
\label{fig:exp-gaussian-embedding-dim}
\end{figure}

\noindent\textbf{Effect of Semantic Attribute Removal:} 
In this experiment, we examine the effect of removing $S$ (i.e., age) from the input data in the Folktables-WA dataset and examine whether this removal helps the utility-invariance trade-off. Figure~\ref{fig:exp-gaussian-embedding-dim} (c) shows the utility-invariance trade-off resulting from all methods, and Figure~\ref{fig:exp-gaussian-embedding-dim} (d) compares removing and keeping the age information from the input data for K$-\mathcal T_{\text{Opt}}$. Observe that: 1) There is almost the same trend in both keeping and removing the age attribute from the input data for all methods. 2) Removing the age attribute from input data slightly degrades the utility-invariance trade-off due to the lower information contained in the input data.
\section{Conclusion\label{sec:con}}
Invariant representation learning (IRepL) often involves a trade-off between utility and invariance. While the existence of such trade-off and its bounds have been studied, its \emph{exact} characterization has not been investigated. This paper took steps towards addressing this problem by i) establishing the \textit{exact} kernelized trade-off (denoted by K$-\mathcal T_{\text{Opt}}$), ii) determining the optimal dimensionality of the data representation necessary to achieve a desired optimal trade-off point, and iii) developing a scalable learning algorithm for encoders in some RKHSs to achieve K$-\mathcal T_{\text{Opt}}$. Numerical results on an illustrative example and two real-world datasets show that commonly used adversarial representation learning-based techniques cannot attain the optimal trade-off estimated by our solution.

Our theoretical results and empirical solutions shed light on the utility-invariance trade-off for various settings, such as algorithmic fairness and privacy-preserving learning under the scalarization of the bi-objective trade-off formulation. Furthermore, the trade-off in IRepL is also a function of the involved dependence measure that quantifies the dependence of learned representations on the semantic attribute. As such, the trade-off obtained in this paper is optimal for HSIC-like dependence measures. Studying the bi-objective trade-off (rather than the scalarization) and employing other universal measures are possible directions for future work.

\section*{Broader Impact}
IRepL can enable many machine learning systems to generalize to the domains that have not been trained on or prevent the leakage of private (sensitive) information while being effective for the desired prediction task(s).  
In particular, IRepL has a direct application in fairness which is a significant societal problem. Even though this paper aims to characterize the utility-invariance trade-off as a byproduct, our paper proposes an algorithm that learns fair representations of data. More generally, these approaches can enable machine learning systems to discard specific data before making predictions. We point out that demographic parity, the fairness criterion considered in this paper, can be unsuitable as a fairness criterion in some practical scenarios~\citep{hardt2016equality, chouldechova2017fair} and other fairness criteria like equalized odds (EO) or equality of opportunity (EOO)~\citep{hardt2016equality} should be considered. The method proposed in this paper can be extended to other notions of fairness, such as EO and EOO, by modifying Dep$(Z, S)$ to capture the dependency between $Z$ and $S$, given $Y$. We leave this extension to future work.

\vspace{5pt}
\noindent\textbf{Acknowledgements:} This work was supported in part by financial assistance from the U.S. Department of Commerce, National Institute of Standards and Technology (award \#60NANB18D210) and the National Science Foundation (award \#2147116).

\newpage
\bibliography{mybib}
\bibliographystyle{tmlr-files/tmlr}


\begin{appendices}
\onecolumn




\section{A Population Expression for Definition in~\eqref{eq:def-dep}\label{sec:app-def-dep}}

A population expression for $\text{Dep}(Z, S)$ in~\eqref{eq:def-dep} is given in the following.
\begin{eqnarray}
\text{Dep}(Z, S)&=&\sum_{j=1}^r\Big\{\mathbb E_{X, S, X^\prime, S^\prime} \left[f_j(X)\, f_j(X^\prime)\,k_{S}(X, X^\prime) \right] + \mathbb E_{X}\left[f_j(X)\right] \mathbb E_{X^\prime}\left[f_j(X^\prime)\right]
\,\mathbb E_{S, S^\prime}\left[k_{S}(X, S^\prime) \right]\nn
\\
 &&\qquad \qquad\qquad\qquad\qquad\qquad -2\,\mathbb E_{X, S}\left[f_j(X)\,\mathbb E_{X^\prime} [f_j(X^\prime)]\,\mathbb E_{S^\prime}[k_{S}(S, X^\prime)] \right]
\Big\}\nn
\end{eqnarray}
where $(X^\prime, S^\prime)$ is independent of $(X, S)$ with the same distribution as $\bm p_{X S}$.

\begin{proof}

We first note that this population expression is inspired by that of HSIC~\citep{gretton2005measuring}.

\newcommand\mya{\mathrel{\stackrel{\makebox[0pt]{\mbox{\normalfont\small (a)}}}{=}}}
\newcommand\myb{\mathrel{\stackrel{\makebox[0pt]{\mbox{\normalfont\small (b)}}}{=}}}
Consider the operator $\Sigma_{SX}$ induced by the linear functional $\cov\left(\alpha(X), \beta_{S}(S)\right)=\left\langle\beta_S, \Sigma_{SX}\alpha\right\rangle_{\mathcal H_S}$.
Then, it follows that
\allowdisplaybreaks
\begin{eqnarray}
\text{Dep}(Z, S)
&=&\sum_{j=1}^r \sum_{ \beta_{S}\in  \mathcal U_{S}} \cov^2\left(f_j(X), \beta_{S}(S)\right)\nn\\
&=& \sum_{j=1}^r \sum_{ \beta_{S}\in  \mathcal U_{S}} \left\langle \beta_{S}, \Sigma_{SX} f_j  \right \rangle _{\mathcal H_{S}}^2\nn\\
&=&  \sum_{j=1}^r \sum_{\beta_{S}\in  \mathcal U_{S}} \left\langle \beta_{S}, \Sigma_{SX} f_j  \right \rangle _{\mathcal H_{S}}^2\nn\\
&\mya& \sum_{j=1}^r\|\Sigma_{SX} f_j\|_{\mathcal H_{S}}^2\nn\\
&=&\sum_{j=1}^r\left\langle \Sigma_{SX} f_j, \Sigma_{SX} f_j  \right \rangle_{\mathcal H_{S}}\nn\\
&\myb&\sum_{j=1}^r\cov \left(f_j(X), \,\big(\Sigma_{SX}f_j\big)(S)\right) \nn\\
&=&\sum_{j=1}^r \cov \left( f_j(X), \,\left\langle k_{S}(\cdot, S), \Sigma_{SX}f_j \right\rangle_{\mathcal H_{S}}\right) \nn\\
&=&\sum_{j=1}^r \cov \left( f_j(X), \cov (f_j(X^\prime), \,k_{S}(S^\prime, S))\right) \nn\\
&=& \sum_{j=1}^r \cov \left(\,f_j(X),\, \mathbb E_{X^\prime, S^\prime} [f_j(X^\prime)\,k_{S}(S, S^\prime)]-\mathbb E_{X^\prime}[f_j(X^\prime)]\,\mathbb E_{S^\prime}[k_{S}(S, S^\prime)]\,\right) \nn\\
&=& \sum_{j=1}^r\Big\{\mathbb E_{X, S, X^\prime, S^\prime} \left[f_j(X)\, f_j(X^\prime)\,k_{S}(S, S^\prime) \right]+ \mathbb E_{X}\left[f_j(X)\right] \mathbb E_{X^\prime}\left[f_j(X^\prime)\right]
\,\mathbb E_{S, S^\prime}\left[k_{S}(S, S^\prime) \right]\nn\\
&& - 2\,\mathbb E_{X, S}\left[f_j(X)\,\mathbb E_{X^\prime} [f_j(X^\prime)]\,\mathbb E_{S^\prime}[k_{S}(S, S^\prime)] \right]\Big\} \nn
\end{eqnarray}
where (a) is due to Parseval relation for orthonormal basis and (b) is from the definition of $\Sigma_{SX}$. 
\end{proof}

\section{Proof of Lemma~\ref{lemma:emp}\label{sec:app-lemma-emp}}
\begin{lemma1}
Let $\bm K_X,\bm K_S\in \mathbb R^{n\times n}$ be Gram matrices corresponding to $\mathcal H_X$ and $\mathcal H_S$, respectively, i.e., $\left(\bm K_X\right)_{ij}=k_X(\bm x_i, \bm x_j)$ and $\left(\bm K_S\right)_{ij}=k_S(\bm s_i, \bm s_j)$, where covariance is empirically estimated as
\begin{eqnarray}
\cov\left(f_j(X), \beta_S(S) \right)\approx \frac{1}{n}\sum_{i=1}^n f_j(\bm x_i)\beta_S(\bm s_i)-\frac{1}{n^2} \sum_{i=1}^n\sum_{k=1}^n f_j(\bm x_i) \beta_S(\bm s_k).\nn
\end{eqnarray}
It follows that, the corresponding empirical estimation for $\text{Dep}\left(Z, S\right)$ is
\begin{eqnarray}
\text{Dep}^{\text{emp}}\left(Z, S\right)&:=&\frac{1}{n^2}\left\|\bm \Theta \bm K_X \bm H \bm L_S \right\|^2_F,
\end{eqnarray}
where $\bm H = \bm I_n-\frac{1}{n} \bm 1_n \bm 1_n^T$ is the centering matrix, and $\bm L_S$ is a full column-rank matrix in which $\bm L_S \bm L_S^T=\bm K_{S}$ (Cholesky factorization), and $\bm K_S$ is the Gram matrix corresponding to $\mathcal H_S$. Furthermore, the empirical estimator in \eqref{eq:empirical-form} has a bias of $\mathcal O(n^{-1})$ and a convergence rate of $\mathcal{O}(n^{-1/2})$.
\end{lemma1}
\begin{proof}
Firstly, let us reconstruct the orthonormal set $\mathcal U_{S}$ when $n$ i.i.d. observations $\{\bm s_j\}_{j=1}^n$ are given. Invoking representer theorem, for two arbitrary elements $\beta_i$ and $\beta_m$ in $\mathcal U_{S}$, we have
\begin{eqnarray}
\langle \beta_i, \, \beta_m \rangle_{\mathcal H_S} &=& \left \langle \sum_{j=1}^n \alpha_j k_{S}(\bm s_j, \cdot), \sum_{l=1}^n \eta_l k_{S}(\bm s_l, \cdot) \right\rangle_{\mathcal H_{S}}\nn\\
&=&\sum_{j=1}^n \sum_{l=1}^n \alpha_j\eta_l k_S(\bm s_j, \bm s_l)\nn\\
&=& \bm \alpha^T \bm K_{S} \bm \eta\nn\\
&=& \left \langle \bm L_{S}^T \bm\alpha, \, \bm L_{S}^T \bm\eta  \right \rangle_{\mathbb R^q}\nn
\end{eqnarray}
where $\bm L_{S}\in \mathbb R^{n\times q}$ is a full column-rank matrix and $\bm K_{S}=\bm L_{S} \bm L_{S}^T$ is the Cholesky factorization of $\bm K_S$. As a result, searching for $\beta_i\in \mathcal U_{S}$ is equivalent to searching for $\bm L_{S}^T \bm\alpha \in \mathcal U_q$ where $\mathcal U_q$ is any complete orthonormal set for $\mathbb R^q$. Using empirical expression for covariance, we get
\begin{eqnarray}
\text{Dep}^{\text{emp}}(Z, S)&:=&\sum_{ \beta_{S}\in  \mathcal U_{S}}
\sum_{j=1}^r \left\{\frac{1}{n}\sum_{i=1}^n f_j(\bm x_i)\beta_{S}(\bm s_i)
 -\frac{1}{n^2}\sum_{i=1}^n f_j(\bm x_i) \sum_{k=1}^n \beta_{S}(\bm s_k)\right\}^2\nn\\
&=& \sum_{\bm L_{S}^T\bm \alpha\in \mathcal U_q} \sum_{j=1}^r  \Big\{\frac{1}{n}\bm \theta_{j}^T \bm K_{X} \bm K_{S} \bm \alpha
-\frac{1}{n^2} \bm \theta_{j}^T \bm K_{X} \bm 1_n \bm 1_n^T \bm K_{S} \bm \alpha\Big\}^2 \nn\\
&=& \sum_{\bm L_{S}^T\bm \alpha\in \mathcal U_q} \sum_{j=1}^r \left\{ \frac{1}{n} \bm \theta_{j}^T \bm K_{X} \bm H \bm K_{S} \bm \alpha\right\}^2 \nn\\
&=& \sum_{\bm L_{S}^T\bm \alpha\in \mathcal U_q} \sum_{j=1}^r \left\{ \frac{1}{n} \bm \theta_{j}^T \bm K_{X}  \bm H \bm L_{S} \bm L_{S}^T \bm \alpha \right\}^2 \nn\\
&=& \sum_{\bm \zeta \in \mathcal U_q} \sum_{j=1}^r \left\{\frac{1}{n} \bm \theta_{j}^T \bm K_{X}  \bm H \bm L_{S} \bm \zeta\right\}^2 \nn\\
&=& \sum_{\bm \zeta \in \mathcal U_q} \frac{1}{n^2} \left\|\bm \Theta \bm K_{X} \bm H \bm L_{S}\bm \zeta\right\|_2^2 \nn\\
&=& \frac{1}{n^2}\left\|\bm \Theta \bm K_{X} \bm H \bm L_{S} \right\|^2_F\nn,
\end{eqnarray}
where $\bm f(X) = \bm \Theta \big[k_{X}(\bm x_1, X), \cdots, k_{X}(\bm x_n, X) \big]^T$ and $\bm \Theta:=[\bm \theta_1,\cdots,\bm \theta_r]^T$.

We now show that the bias of $\text{Dep}^{\text{epm}}(Z, S)$ for estimating $\text{Dep}(Z, S)$ in~\eqref{eq:empirical-form} is $\mathcal O\left(\frac{1}{n}\right)$. To achieve this, we split $\text{Dep}^{\text{epm}}(Z, S)$ into three terms as,
\begin{eqnarray}\label{eq:emp-decomp}
\frac{1}{n^2}\left\| \bm \Theta \bm K_{X}\bm H \bm L_{S}\right\|_F^2&=&\frac{1}{n^2}\text{Tr} \left\{ \bm \Theta \bm K_{X} \bm H \bm K_{S} \bm H \bm K_{X} \bm \Theta^T\right\}\nn\\
&=&\frac{1}{n^2}\text{Tr} \left\{ \bm \Theta \bm K_{X} \left(\bm I - \frac{1}{n}\bm 1 \bm 1^T\right) \bm K_{S} \left(\bm I - \frac{1}{n}\bm 1 \bm 1^T\right) \bm K_{X} \bm \Theta^T\right\}\nn\\
&=& \frac{1}{n^2}\underbrace{\text{Tr} \left\{ \bm K_{X} \bm \Theta^T\bm \Theta \bm K_{X} \bm K_{S} \right\}}_{\text{I}} -\frac{2}{n^3} \underbrace{\text{Tr} \left\{ \bm 1^T \bm K_{X} \bm \Theta^T\bm \Theta \bm K_{X} \bm K_{S} \bm 1 \right\}}_{\text{II}}\nn\\
&&+ \frac{1}{n^4} \underbrace{\text{Tr} \left\{ \bm 1^T \bm K_{X} \bm \Theta^T\bm \Theta \bm K_{X} \bm 1 \bm 1^T \bm K_{S} \bm 1 \right\}}_{\text{III}}
\end{eqnarray}
Let ${\bm c}_p^n$ denote the set of all $p$-tuples drawn without replacement from
$\{1,\cdots, n\}$. Moreover, let $\bm \Theta=[\bm \theta_1, \cdots, \bm\theta_r]^T\in \mathbb R^{r\times n}$ and $(\bm A)_{ij}$ denote the element of an arbitrary matrix $\bm A$ at $i$-th row and $j$-th column. Then, it follows that 

(I):
\begin{eqnarray}\label{eq:I}
\mathbb E\left[ \text{Tr} \left\{ \bm K_{X} \bm \Theta^T \bm \Theta \bm K_{X} \bm K_{S}  \right\}\right]
&=& \sum_{k=1}^r\mathbb E\left[ \text{Tr} \left\{ \underbrace{\bm K_{X} \bm \theta_{k}}_{:=\bm \alpha_k}\bm \theta^T_{k} \bm K_{X} \bm K_{S} \right\}\right]\nn\\
&=& \sum_{k=1}^r\mathbb E\left[ \text{Tr} \left\{ \bm \alpha_k\bm \alpha_k^T \bm K_{S} \right\}\right]\nn\\
&=& \sum_{k=1}^r\mathbb E\left[ \sum_{i} (\bm \alpha_k\bm \alpha_k^T)_{ii}(\bm K_{S})_{ii} + \sum_{(i,j)\in \bm c_2^n} (\bm \alpha_k\bm \alpha_k^T )_{ij}(\bm K_{S})_{ji}\right]\nn\\
&=& n\sum_{k=1}^r \mathbb E_{X,S}\left[ f^2_k(X)k_{S}(S, S)\right]
+ \frac{n!}{(n-2)!}\sum_{k=1}^r \mathbb E_{X, S,X^\prime, S^\prime}\left[f_k(X) f_k(X^\prime) k_{S}(S,S^\prime) \right]\nn\\
&=&\mathcal O(n)+\frac{n!}{(n-2)!}\sum_{k=1}^r \mathbb E_{X, S,X^\prime, S^\prime}\left[f_k(X) f_k(X^\prime) k_{S}(S,S^\prime) \right]\nn\\
\end{eqnarray}
where $(X, S)$ and $(X^\prime, S^\prime)$ are independently drawn from the joint distribution $\bm p_{X S}$.

(II):
\allowdisplaybreaks
\begin{eqnarray}\label{eq:II}
\mathbb E\left[\bm 1^T \bm K_{X} \bm \Theta^T\bm \Theta \bm K_{X} \bm K_{S} \bm 1\right]
&=& \sum_{k=1}^r\mathbb E\left[\bm 1^T \underbrace{\bm K_{X} \bm \theta_{k}}_{\bm \alpha_k}\bm \theta^T_{k} \bm K_{X} \bm K_{S} \bm 1\right]\nn\\
&=& \sum_{k=1}^r\mathbb E\left[\bm 1^T \bm \alpha_k \bm \alpha_k^T \bm K_{S} \bm 1\right]\nn\\
&=& \sum_{k=1}^r\mathbb E\left[\sum_{m=1}^n\sum_{i=1}^n\sum_{j=1}^n (\bm\alpha_k \bm \alpha_k^T)_{mi} (\bm K_{S})_{mj} \right]\nn\\
&=& \sum_{k=1}^r\mathbb E\left[\sum_{i} (\bm\alpha_k \bm \alpha_k^T)_{ii} (\bm K_{S})_{ii} +\sum_{(m, j)\in \bm c_2^n} (\bm\alpha_k \bm \alpha_k^T)_{mm} (\bm K_{S})_{mj} \right]\nn\\
&+& \sum_{k=1}^r\mathbb E\left[\sum_{(m, i)\in \bm c_2^n} (\bm\alpha_k \bm \alpha_k^T)_{mi} (\bm K_{S})_{mm} + \sum_{(m,j)\in \bm c_2^n} (\bm\alpha_k \bm \alpha_k^T)_{mj} (\bm K_{S})_{mj} \right]\nn\\
&+& \sum_{k=1}^r\mathbb E\left[\sum_{(m, i, j)\in \bm c_3^n} (\bm\alpha_k \bm \alpha_k^T)_{mi} (\bm K_{S})_{mj} \right]\nn\\
&=& n\sum_{k=1}^r \mathbb E_{X,S}\left[ f^2_k(X)k_{S}(S, S)\right]+  \frac{n!}{(n-2)!}\sum_{k=1}^r \mathbb E_{X,S, S^\prime}\left[ f^2_k(X)k_{S}(S, S^\prime )\right]\nn\\
&+& \frac{n!}{(n-2)!}\sum_{k=1}^r \mathbb E_{X,S, X^\prime}\left[ f_k(X)f_k(X^\prime)k_{S}(S, S )\right]\nn\\
&+& \frac{n!}{(n-2)!}\sum_{k=1}^r \mathbb E_{X,S, X^\prime, S^\prime}\left[ f_k(X)f_k(X^\prime)k_{S}(S, S^\prime )\right]\nn\\
&+& \frac{n!}{(n-3)!}\sum_{k=1}^r \mathbb E_{X,S}\left[ f_k(X)\mathbb E_{X^\prime} [f_k(X^\prime)]\mathbb E_{S^\prime}[k_{S}(S, S^\prime)]\right]\nn\\
&=&\mathcal O(n^2)+\frac{n!}{(n-3)!}\sum_{k=1}^r \mathbb E_{X,S}\left[ f_k(X)\mathbb E_{X^\prime} [f_k(X^\prime)]\mathbb E_{S^\prime}[k_{S}(S, S^\prime)]\right].
\end{eqnarray}

({III}):
\begin{eqnarray}\label{eq:III}
\mathbb E\left[\bm 1^T \bm K_{X} \bm \Theta^T\bm \Theta \bm K_{X}\bm 1\bm 1^T \bm K_{S} \bm 1\right]
&=& \sum_{k=1}^r\mathbb E\left[\bm 1^T \underbrace{\bm K_{X} \bm \theta_{k}}_{\bm \alpha_k}\bm \theta^T_{k} \bm K_{X}\bm 1\bm 1^T \bm K_{S} \bm 1\right]\nn\\
&=& \sum_{k=1}^r\mathbb E\left[\bm 1^T \bm \alpha_k \bm \alpha_k^T\bm 1\bm 1^T \bm K_{S} \bm 1\right]\nn\\
&=& \sum_{k=1}^r\mathbb E\left[\sum_{i,j,m,l}(\bm \alpha_k \bm \alpha_k^T)_{ij} (\bm K_{S})_{ml} \right]\nn\\
&=& \mathcal O(n^3)+\sum_{k=1}^r\mathbb E\left[\sum_{(i,j,m,l)\in \bm c_4^n}(\bm \alpha_k \bm \alpha_k^T)_{ij} (\bm K_{S})_{ml} \right]\nn\\
&=&\mathcal O(n^3)+\frac{n!}{(n-4)!}\sum_{k=1}^r \mathbb E_{X}\left[f_k(X)\right] E_{X^\prime}\left[f_k(X^\prime)\right]
\,\mathbb E_{S, S^\prime}\left[k_{S}(S, S^\prime) \right]\nn\\
\end{eqnarray}
Using above calculations together with Lemma 2 lead to
\begin{eqnarray}
&&\text{Dep}(Z, S) = \mathbb E \left[ \text{Dep}^{\text{emp}}(Z, S)\right] + \mathcal O\left(\frac{1}{n}\right)\nn.
\end{eqnarray}

We now obtain the convergence of $\text{dep}^{\text{emp}}(Z, S)$.
Consider the decomposition in~(\ref{eq:emp-decomp}) together with
~(\ref{eq:I}),~(\ref{eq:II}), and~(\ref{eq:III}). Let $\bm \alpha_k:=\bm K_{X}\bm \theta_k$ , then it follows that
\begin{eqnarray}
&&\mathbb P\left\{ \text{Dep}(Z, S)-\text{Dep}^{\text{emp}}(Z, S)\ge t\right\}\nn\\
&\le&\mathbb {P}\left\{\sum_{k=1}^r \mathbb E_{X, S,X^\prime, S^\prime}\left[f_k(X) f_k(X^\prime) k_{S}(S,S^\prime) \right]
- \frac{(n-2)!}{n!}\sum_{k=1}^r\sum_{(i, j)\in \bm c_2^n} (\bm \alpha_k\bm \alpha_k^T )_{ij}(\bm K_{S})_{ji}+\mathcal O\left(\frac{1}{n}\right)\ge at\right\}\nn\\
&+&
\mathbb {P}\left\{\sum_{k=1}^r \mathbb E_{X,S}\left[ f_k(X)\mathbb E_{X^\prime} [f_k(X^\prime)]\mathbb E_{S^\prime}[k_{S}(S, S^\prime)]\right]- \frac{(n-3)!}{n!}\sum_{k=1}^r\sum_{(i,j,m)\in\bm c_3^n} (\bm\alpha_k \bm \alpha_k^T)_{mi} (\bm K_{S})_{mj} +\mathcal O\left(\frac{1}{n}\right)\ge bt\right\}\nn\\
&+&
\mathbb {P}\Bigg\{\sum_{k=1}^rE_{X}\left[f_k(X)\right] E_{X^\prime}\left[f_k(X^\prime)\right]
\mathbb E_{S, S^\prime}\left[k_{S}(S, S^\prime) \right]\nn\\
&&- \frac{(n-4)!}{n!}\sum_{k=1}^r\sum_{(i,j,m,l)\in \bm c_4^n}(\bm \alpha_k \bm \alpha_k^T)_{ij} (\bm K_{S})_{ml} +\mathcal O\left(\frac{1}{n}\right)\ge(1-a-b)t\Bigg\}\nn,
\end{eqnarray}
where $a,b>0$ and $a+b<1$. For convenience, we omit the term $\mathcal O\left(\frac{1}{n}\right)$ and add it back in the last stage.

Define $\bm \zeta:=(X, S)$ and consider the following U-statistics~\citep{hoeffding1994probability}
\begin{eqnarray}
u_1(\bm \zeta_i,\bm \zeta_j)&=&\frac{(n-2)!}{n!}\sum_{(i,j)\in \bm c_2^n}\sum_{k=1}^r(\bm \alpha_k\bm \alpha_k^T)_{ij}(\bm K_{S})_{ij}\nn\\
u_2(\bm \zeta_i,\bm \zeta_j, \bm \zeta_m)&=&\frac{(n-3)!}{n!}\sum_{(i,j,m)\in \bm c_3^n}\sum_{k=1}^r(\bm \alpha_k\bm \alpha_k^T)_{mi}(\bm K_{S})_{mj}\nn\\
u_3(\bm \zeta_i,\bm \zeta_j, \bm \zeta_m, \bm \zeta_l)&=&\frac{(n-4)!}{n!}\sum_{(i,j,m,l)\in \bm c_4^n}\sum_{k=1}^r(\bm \alpha_k\bm \alpha_k^T)_{ij}(\bm K_{S})_{ml}\nn
\end{eqnarray}
Then, from Hoeffding's inequality~\citep{hoeffding1994probability} it follows that
\begin{eqnarray}
\mathbb P\left\{ \text{Dep}(Z, S)-\text{Dep}^{\text{emp}}(Z, S)\ge t\right\}\le e^{\frac{-2a^2t^2}{2 r^2 M^2}n} + e^{\frac{-2b^2t^2}{3 r^2M^2}n}+e^{\frac{-2(1-a-b)^2t^2}{4r^2M^2}n}\nn,
\end{eqnarray}
where we assumed that $k_{S}(\cdot, \cdot)$ is bounded by one and $f_k^2(X_i)$ is bounded by $M$ for any $k=1,\cdots,r$ and $i=1,\cdots, n$.

Further, if $0.22\le a<1$, it holds that 
\begin{eqnarray}
e^{\frac{-2a^2t^2}{2 r^2M^2}n} + e^{\frac{-2b^2t^2}{3r^2M^2}n}+e^{\frac{-2(1-a-b)^2t^2}{4r^2M^2}n}\le 3 e^{\frac{-a^2t^2}{r^2M^2}n}\nn.
\end{eqnarray}
Consequently, we have 
\begin{eqnarray}
\mathbb P\left\{ \left|\text{Dep}(Z, S)-\text{Dep}^{\text{emp}}(Z, S)\right|\ge t\right\}
\le 6 e^{\frac{-a^2t^2}{r^2M^2}n}.\nn
\end{eqnarray}
Therefore, with probability at least $1-\delta$, it holds
\begin{eqnarray}
\left|\text{Dep}(Z, S)-\text{Dep}^{\text{emp}}(Z, S)\right|
\le \sqrt{\frac{r^2M^2\log(6/\sigma)}{\alpha^2 n}}+\mathcal O\left(\frac{1}{n}\right).
\end{eqnarray}
\end{proof}

\section{Proof of Theorem~\ref{thm:disentanglement}\label{sec:app-thm-disentanglement}}
\begin{theorem1}
Let $Z=\bm f(X)$ be an arbitrary representation of the input data, where $\bm f \in \mathcal H_X$. Then, there exist an invertible Borel function $\bm h$, such that, $\bm h \circ \bm f$ belongs to $\mathcal A_r$.
\end{theorem1}
\begin{proof}
Recall that the space of disentangled representation is
\begin{eqnarray}
\mathcal A_r:=\left\{\left(f_1,\cdots, f_r\right ) \,\Big|\, f_i, f_j\in \mathcal H_X,\, \cov \left(f_i(X), f_j(X) \right)+\gamma \langle f_i, f_j\rangle_{\mathcal H_X}=\delta_{i,j}\right\}\nn,
\end{eqnarray}
where $\gamma>0$. Let $I_X$ denote the identity operator from $\mathcal H_X$ to $\mathcal H_X$. We claim that $\bm h:=[h_1, \cdots, h_r]$, where
\begin{eqnarray}
&&\bm G_0 = 
\begin{bmatrix}
\langle f_1, f_1\rangle_{\mathcal H_X}&\cdots&\langle f_1, f_r\rangle_{\mathcal H_X}\\
\vdots& \ddots& \vdots\\
\langle f_r, f_1\rangle_{\mathcal H_X}&\cdots&\langle f_r, f_r\rangle_{\mathcal H_X}
\end{bmatrix}\nn\\
&& \bm G ={\bm G_0}^{-1/2}\nn\\
&&h_j\circ \bm f = \sum_{m=1}^r g_{j m}\left( \Sigma_{XX} + \gamma I_X \right)^{-1/2} f_j, \quad \forall j=1,\cdots,r\nn
\end{eqnarray}
is the desired invertible transformation.
To see this, construct
\begin{eqnarray}
&&\cov \left(h_i(\bm f(X)), h_j(\bm f(X)) \right)+\gamma \langle h_i\circ \bm f, h_j\circ  \bm f\rangle_{\mathcal H_X}\nn\\
&=& \left\langle h_i\circ \bm f, (\Sigma_{XX} + \gamma I_X)\,h_j\circ \bm f\right\rangle_{\mathcal H_X}\nn\\
&=& \left\langle \sum_{m=1}^r g_{i m}\left( \Sigma_{XX} + \gamma I_X \right)^{-1/2} f_i, \sum_{k=1}^r g_{j k}(\Sigma_{XX} + \gamma I_X)\left( \Sigma_{XX} + \gamma I_X \right)^{-1/2}\,f_j\right\rangle_{\mathcal H_X}\nn\\
&=&\sum_{m=1}^r\sum_{k=1}^r g_{i m}\, g_{j k}\,\left \langle f_i, f_j \right\rangle_{\mathcal H_X} = \left(\bm G \,\bm G_0\, \bm G\right)_{ij} = \delta_{i, j}\nn
\end{eqnarray}
The inverse of $\bm h$ is $\bm h^\prime:=[h_1^\prime, \cdots, h_r^\prime ]$ where
\begin{eqnarray}
&&\bm H = \bm G_0^{1/2}\nn\\
&&h^\prime_j\circ \bm h = \sum_{m=1}^r h_{j m}\left( \Sigma_{XX} + \gamma I_X \right)^{1/2} h_j, \quad \forall j=1,\cdots,r\nn.
\end{eqnarray}

\end{proof}

\section{Proof of Theorem~\ref{thm:main}\label{sec:app-thm-main}}
\label{sec:thm-main}
\begin{theorem1}
Consider the operator  $\Sigma_{SX}$ to be induced by the bi-linear functional $\cov(\alpha(X), \beta_S(S))=\left\langle \Sigma_{SX}\alpha, \beta_S\right\rangle_{\mathcal H_S}$ and define $\Sigma_{YX}$ and $\Sigma_{XX}$, similarly. Then, a global optimizer for the optimization problem in~(\ref{eq:main}) is the eigenfunctions corresponding to the $r$ largest eigenvalues of the following generalized eigenvalue problem
\begin{eqnarray}\label{eq:eig}
\left((1-\lambda)\,\Sigma^*_{Y X}\Sigma_{Y X} -\lambda\,\Sigma^*_{S X}\Sigma_{S X}\right) \bm f = \tau \, \left(\Sigma_{X X} +\gamma\,I_X \right)\bm f,
\end{eqnarray}
where $\gamma$ is the disentanglement regularization parameter defined in \eqref{eq:A}, and $\Sigma^*$ is the adjoint of $\Sigma$.
\end{theorem1}

\begin{proof}
Consider $\text{Dep}(Z, S)$ in~\eqref{eq:def-dep}:
\begin{eqnarray}
\text{Dep}(Z, S)
&=& \sum_{ \beta_{S}\in  \mathcal U_{S}} \sum_{j=1}^r \cov^2\left(f_j(X), \beta_{S}(S)\right)\nn\\
&=&  \sum_{j=1}^r \sum_{ \beta_{S}\in  \mathcal U_{S}} \left\langle \beta_{S}, \Sigma_{SX} f_j  \right \rangle _{\mathcal H_{S}}^2\nn\\
&=& \sum_{j=1}^r\|\Sigma_{SX} f_j\|_{\mathcal H_{S}}^2\nn,
\end{eqnarray}
where the last step is due to Parseval's identity for orthonormal basis set.
Similarly, we have $\text{dep}(Z, Y)=\sum_{j=1}^r\|\Sigma_{YX} f_j\|_{\mathcal H_{Y}}^2$.
Recall that $Z=\bm f(X)=\left[(f_1(X), \cdots,f_r(X)\right]$, then it follows that 
\begin{eqnarray}
J\big(\bm f(X)\big) &=& (1-\lambda)\sum_{j=1}^r \|\Sigma_{Y X}f_j\|^2_{\mathcal H_{Y}}-\lambda \,\sum_{j=1}^r \|\Sigma_{S X}f_j\|^2_{\mathcal H_{S}}\nn\\
&=& (1-\lambda)\sum_{j=1}^r \left\langle\Sigma_{Y X}f_j, \Sigma_{Y X}f_j\right\rangle_{\mathcal H_{Y}}-\lambda \,\sum_{j=1}^r \left\langle\Sigma_{S X}f_j, \Sigma_{S X}f_j\right\rangle_{\mathcal H_{S}}\nn\\
&=&\sum_{j=1}^r\left\langle f_j,\,\big((1-\lambda)\Sigma^*_{Y X}\Sigma_{Y X} -\lambda\,\Sigma^*_{S X}\Sigma_{S X}\big)f_j\right\rangle_{\mathcal H_{X}},\nn
\end{eqnarray}
where $\Sigma ^*$ is the adjoint operator of $\Sigma $.
Further, note that $\cov\big(f_i(X), f_j(X)\big)$ is equal to $\big\langle f_i, \Sigma_{XX}f_j\big\rangle_{\mathcal H_{X}}$.
As a result, the optimization problem in~\eqref{eq:eig} can be restated as
\begin{eqnarray}
\sup_{\langle f_i, (\Sigma_{XX}+\gamma I_{X}) f_k\rangle_{\mathcal H_{X}}=\delta_{i,k}}\,
\sum_{j=1}^r\left\langle f_j,\,\left((1-\lambda)\Sigma^*_{Y X}\Sigma_{Y X} -\lambda\,\Sigma^*_{S X}\Sigma_{S X}\right)f_j\right\rangle_{\mathcal H_{X}},\quad 1\le i,k\le r\nn
\end{eqnarray}
where $I_{X}$ denotes identity operator from $\mathcal H_{X}$ to $\mathcal H_{X}$.
This optimization problem is known as generalized Rayleigh quotient~\citep{strawderman1999symmetric} and a possible solution to it is
given by the eigenfunctions corresponding to the $r$ largest eigenvalues of the following generalized problem
\begin{eqnarray}
\left((1-\lambda)\Sigma_{X Y}\Sigma_{Y X} -\lambda\,\Sigma_{X S}\Sigma_{S X}\right) f = \lambda \, \left(\Sigma_{X X}+\gamma I_{X}\right) f\nn.
\end{eqnarray}
\end{proof}


\section{Proofs of Theorem~\ref{thm:main-emp} and Corollary~\ref{corrolary:r}\label{sec:app-thm-main-emp}}
\label{sec:emp}
\begin{theorem1}
Let the Cholesky factorization of $\bm K_X$ be $\bm K_X=\bm L_X \bm L_X^T$,  where $\bm L_X\in \mathbb R^{n\times d}$ ($d\le n$) is a full column-rank matrix. Let $r\le d$, then a solution to~(\ref{eq:main-emp}) is 
\begin{eqnarray}
\bm f^{\text{opt}}(X) =
\bm \Theta^{\text{opt}}
\left[k_X(\bm x_1, X),\cdots, k_X(\bm x_n, X)\right]^T\nn
\end{eqnarray}
where $\bm \Theta^{\text{opt}}=\bm U^T \bm L_X^\dagger$
and the columns of $\bm U$
are eigenvectors corresponding to the $r$ largest eigenvalues of the following generalized eigenvalue problem.
\begin{eqnarray}
\bm L^T_X\left((1-\lambda)  \bm H\bm K_Y\bm H  -\lambda \bm H\bm K_S\bm H\right)\bm L_X\bm u = \tau \left(\frac{1}{n}\,\bm L^T_X \bm H \bm L_X + \gamma \bm I\right) \bm u.
\end{eqnarray}
Further, the supremum value of~(\ref{eq:main-emp}) is equal to $\sum_{j=1}^r\tau_j$, where $\{\tau_1,\cdots,\tau_r\}$ are $r$ largest eigenvalues of~\eqref{eq:eig-emp}.
\end{theorem1}
\begin{proof}
Consider the Cholesky factorization, $\bm K_X=\bm L_X \bm L_X^T$ where $\bm L_X$ is a full column-rank matrix. Using the representer theorem, the disentanglement property in ~\eqref{eq:A} can be expressed as
\begin{eqnarray}
&&\cov \left(f_i(X),\,f_j(X) \right) + \gamma\, \langle f_i, f_j \rangle_{\mathcal H_{X}}\nn\\
&=& \frac{1}{n}\sum_{k=1}^nf_i(\bm x_k) f_j(\bm x_k) -\frac{1}{n^2}\sum_{k=1}^n f_i(\bm x_k)\sum_{m=1}^n f_j(\bm x_m) + \gamma\, \langle f_i, f_j \rangle_{\mathcal H_{X}} \nn\\
&=&\frac{1}{n}\sum_{k=1}^n \sum_{t=1}^n \bm K_{X} (\bm x_k, \bm x_t)\theta_{i t}
\sum_{m=1}^n \bm K_{X} (\bm x_k, \bm x_m)\theta_{j m}-\frac{1}{n^2}\bm \theta_i^T \bm K_{X} \bm 1_n \bm 1_n^T \bm K_{X}\bm \theta_j+\gamma\, \langle f_i, f_j \rangle_{\mathcal H_{X}}\nn\\
&=& \frac{1}{n} \left(\bm K_{X} \bm \theta_i\right)^T
\left(\bm K_X \bm \theta_j\right)-\frac{1}{n^2}\bm \theta_i^T \bm K_{X} \bm 1_n \bm 1_n^T \bm K_{X}\bm \theta_j+\gamma\,
\left\langle \sum_{k=1}^n \theta_{ik}k_{X}(\cdot, \bm x_k), \sum_{t=1}^n \theta_{it}k_{X}(\cdot, \bm x_t)\right\rangle_{\mathcal H_{X}} \nn\\
&=& \frac{1}{n} \bm \theta_i^T \bm K_{X}\bm H \bm K_{X} \bm \theta_j
+ \gamma\,\bm \theta^T_i \bm K_{X} \bm \theta_j\nn\\
&=& \frac{1}{n} \bm \theta_i^T \bm L_{X} \left(\bm L^T_{X}\bm H \bm L_{X} + n\gamma\,\bm I\right) \bm L^T_{X} \bm \theta_j\nn\\
&=&\delta_{i,j}.\nn
\end{eqnarray}
As a result, $\bm f\in \mathcal A_r$ is equivalent to 
\begin{eqnarray}
\bm \Theta \bm L_{X} \underbrace{\Big( \frac{1}{n}\bm L^T_{X}\bm H \bm L_{X} + \gamma\bm I\Big)}_{:=\bm C} \bm L^T_{X} \bm \Theta^T= \bm I_r\nn,
\end{eqnarray}
where $\bm \Theta:=\big[ \bm \theta_1,\cdots, \bm \theta_r\big]^T\in \mathbb R^{r\times n}$.

Let $\bm V = \bm L_{X}^T\bm \Theta ^T $ and consider the optimization problem in~(13): 
 \begin{eqnarray}\label{eq:trace}
 &&\sup_{\bm f \in \mathcal A_r} \left\{(1-\lambda)\,\text{Dep}^{\text{emp}}(\bm f(X), Y) -
\lambda\, \text{Dep}^{\text{emp}}(\bm f(X), S)\right\}\nn\\
 &=&\sup_{\bm f \in \mathcal A_r} \frac{1}{n^2}\left\{(1-\lambda)\left\|\bm \Theta \bm K_{X} \bm H \bm L_{Y} \right\|^2_F
 -\lambda\, \left\|\bm \Theta \bm K_{X} \bm H \bm L_{S} \right\|^2_F\right\}\nn\\
  &=&\sup_{\bm f \in \mathcal A_r } \frac{1}{n^2}\left\{(1-\lambda)\,\text{Tr}\left\{\bm \Theta \bm K_{X} \bm H \bm K_{Y} \bm H \bm K_{X}\bm \Theta^T\right\}
 -\lambda \,\text{Tr}\left\{\bm \Theta \bm K_{X} \bm H \bm K_{S}  \bm H \bm K_{X}\bm \Theta^T\right\}\right\}\nn\\
  &=&\max_{\bm V^T \bm C \bm V = \bm I_r} \frac{1}{n^2} \text{Tr} \left\{\bm\Theta \bm L_{X}  \bm B \bm L_{X}^T \bm \Theta^T\right\}\nn\\
&=&\max_{\bm V^T \bm C \bm V = \bm I_r} \frac{1}{n^2} \text{Tr} \left\{ \bm V^T  \bm B \bm V \right\}
 \end{eqnarray}
where the second step is due to~\eqref{eq:empirical-form} and
\begin{eqnarray}
\bm B&:=&\bm L_{X}^T\left( (1-\lambda) \bm H \bm K_{Y} \bm H -\lambda \bm H \bm K_{S} \bm H\right)\bm L_{X}\nn
\end{eqnarray}
It is shown in~\citet{kokiopoulou2011trace} that an\footnote{Optimal $\bm V$ is not unique.} optimizer of~(\ref{eq:trace}) is any matrix $\bm U$ whose columns are eigenvectors corresponding to $r$ largest eigenvalues of generalized problem
\begin{eqnarray}\label{eq:eig-gen-proof}
\bm B \bm u = \tau \,\bm C \bm u 
\end{eqnarray}
and the maximum value is the summation of $r$ largest eigenvalues. Once $\bm U$ is determined, then, any $\bm \Theta$ in which $\bm L_{X}^T\bm \Theta^T = \bm U$ is optimal $\bm \Theta$ (denoted by $\bm \Theta^{\text{opt}}$).
Note that $\bm \Theta^{\text{opt}}$ is not unique and has a general form of
\begin{eqnarray}
\bm \Theta^T = \left( \bm L_{X}^T\right)^\dagger \bm U + \bm \Lambda_0, \quad  \mathcal R(\bm \Lambda_0)\subseteq \mathcal N \left( \bm L^T_{X}\right).\nn
\end{eqnarray}
However, setting $\bm \Lambda_0$ to zero would lead to minimum norm for  $\bm \Theta$. Therefore, we opt $\bm \Theta^{\text{opt}}=\bm U^T \bm L_{X}^\dagger$.
\end{proof}
\begin{corollary1}
\emph{Embedding Dimensionality}:
A useful corollary of Theorem~\ref{thm:main-emp} is characterizing optimal embedding dimensionality as a function of trade-off parameter, $\lambda$:
\begin{eqnarray}
r^{\text{Opt}}(\lambda):=\arg\sup_{0\le r\le l }\left\{\sup_{\bm f \in \mathcal A_r} \left\{J^{\text{emp}}\left(\bm f, \lambda\right)\right\}\right\}= \text{ number of non-negative eigenvalues of }\eqref{eq:eig-emp}\nn
\end{eqnarray}
\end{corollary1}
\begin{proof}
From proof of Theorem~\ref{thm:main-emp}, we know that
\begin{eqnarray}
\sup_{\bm f \in \mathcal A_r} \left\{(1-\lambda)\,\text{Dep}^{\text{emp}}(\bm f(X), Y) -\lambda\, \text{Dep}^{\text{emp}}(\bm f(X), S)\right\} = \sum_{j=1}^r \tau_j,\nn
\end{eqnarray}
where $\{\tau_1,\cdots,\tau_n\}$ are eigenvalues of the generalized problem in~\eqref{eq:eig-emp} in decreasing order. It follows immediately that
\begin{eqnarray}
\arg\sup_r\left\{\sum_{j=1}^r \tau_j \right\} = \text{number of non-negative elements of }\{\tau_1,\cdots,\tau_l\}.\nn
\end{eqnarray}
\end{proof}

\section{Proof of Theorem~\ref{thm:emp-convg}\label{sec:app-thm-emp-convg}}
\begin{theorem1}
Assume that $k_{S}$ and $k_{Y}$ are bounded by one and $f^2_j(\bm x_i)\le M$ for any $j=1,\dots, r$ and $i=1,\dots, n$ for which $\bm f=(f_1,\dots, f_r)\in \mathcal A_r$. Then, for any $n>1$ and $0<\delta<1$, with probability at least $1-\delta$, we have
\begin{eqnarray}
\left|\sup_{\bm f\in \mathcal A_r}J(\bm f, \lambda)-\sup_{\bm f \in \mathcal A_r}J^{\text{emp}}(\bm f, \lambda)\right|\le rM\sqrt{\frac{\log(6/\delta)}{0.22^2\, n}}+ \mathcal O\left(\frac{1}{n}\right)\nn.
\end{eqnarray}
\end{theorem1}

\begin{proof}
Recall that in the proof of Lemma~\ref{lemma:emp} we have shown that with probability at least
$1-\delta$, the following inequality holds
\begin{eqnarray}
\left|\text{Dep}(Z, S)-\text{Dep}^{\text{emp}}(Z, S)\right|
\le \sqrt{\frac{r^2M^2\log(6/\sigma)}{0.22^2\, n}}+\mathcal O\left(\frac{1}{n}\right).\nn
\end{eqnarray}

Using the same reasoning for $\text{dep}(Z, Y)$, with probability at least $1-\delta$, we have
\begin{eqnarray}
\left|\text{Dep}(Z, Y)-\text{Dep}^{\text{emp}}(Z, Y)\right|
\le \sqrt{\frac{r^2M^2\log(6/\sigma)}{0.22^2\, n}}+\mathcal O\left(\frac{1}{n}\right).\nn
\end{eqnarray}
Since $J(\bm f(X))=(1-\lambda)\,\text{dep}(Z, Y) -\lambda\, \text{dep}(Z, S)$
and $J^{\text{emp}}(\bm f(X)):=(1-\lambda)\,\text{dep}^{\text{emp}}(Z, Y) -\lambda\, \text{dep}^{\text{emp}}(Z, S)$, it follows that with probability at least $1-\delta$,
\begin{eqnarray}
\left|J(\bm f, \lambda)-J^{\text{emp}}(\bm f, \lambda)\right|
\le rM\sqrt{\frac{\log(6/\sigma)}{0.22^2\, n}}+\mathcal O\left(\frac{1}{n}\right).\nn
\end{eqnarray}
We complete the proof by noting that, the following inequality holds for any bounded $J$ and $J^{\text{emp}}$:
\begin{eqnarray}
\left|\sup_{\bm f\in \mathcal A_r}J(\bm f, \lambda)-\sup_{\bm f \in \mathcal A_r}J^{\text{emp}}(\bm f, \lambda)\right|
\le \sup_{\bm f\in \mathcal A_r} \left|J(\bm f, \lambda) -J^{\text{emp}}(\bm f, \lambda)\right|.\nn
\end{eqnarray}
\end{proof}

\section{Optimality of Target Task Performance in K$-\mathcal T_{\text{Opt}}$\label{sec:app-Bayes}}
We show that maximizing $\text{dep}\big( \bm f(X), Y\big)$ can lead to a representation $Z$ that is sufficient to result in the optimal Bayes prediction of $Y$.
\begin{theorem1}
Let $\bm f^*$ be the optimal encoder by maximizing Dep$(\bm f(X), Y)$, where $\gamma\rightarrow 0$ and $\mathcal H_Y$ is a linear RKHS. Then, there exist $\bm W\in \mathbb R^{d_Y\times r}$ and $\bm b \in \mathbb R^{d_Y}$ such that $\bm W\bm f^*(X)+\bm b$ is the Bayes estimator, i.e., 
\begin{eqnarray}
\mathbb E_{X, Y}\left[\|\bm W \bm f(X)^*+\bm b-Y\|^2\right] &=&\inf_{h \text{ is Borel}}\mathbb E_{X, Y} \left[\|h(X)-Y\|^2\right]\nn\\ &=& \mathbb E_{X, Y} \left[\|\mathbb E[Y|\,X]-Y\|^2\right].\nn
\end{eqnarray}
\end{theorem1}
\begin{proof}
 We only prove this theorem for the empirical version due to its convergence to the population counterpart. The optimal Bayes estimator can be the composition of the kernelized encoder $Z=\bm f(X)$ and a linear regressor on top of it. More specifically, $\widehat Y = \bm W \bm f(X) + \bm b$ can approach to $\mathbb E[Y|\,X]$ if we optimize $\bm f$, $\bm W$, and $\bm b$ all together. This is because $\bm f\in\mathcal H_X$ can approximate any Borel function (due to the universality of $\mathcal H_X$) and, since $r\ge d_y$, $\bm W$ can be surjective. 
Let $\bm Z:=[\bm z_1,\cdots, \bm z_n]\in \mathbb R^{r\times n}$ and
$\bm Y:=[\bm y_1,\cdots, \bm y_n]\in \mathbb R^{d_y\times n}$. Further, let $\tilde{\bm Z}$ and $\tilde{\bm y}$ be the centered (i.e., mean subtracted) version of $\bm Z$ and $\bm Y$, respectively. 
We firstly optimize $\bm b$ for any given $\bm f$, $r$, and $\bm W$:
\begin{eqnarray}
\bm b_{\text{opt}}&:=&\argmin _{\bm b} \frac{1}{n} \sum_{i=1}^n\left\|\bm W \bm z_i+ \bm b - \bm y_i\right\|^2\nn\\
&=& \frac{1}{n} \sum_{i=1}^n \bm y_i - \bm W\frac{1}{n}\sum_{i=1}^n  \bm z_i \nn.
\end{eqnarray}
Then, optimizing over $\bm W$ would lead to 
\begin{eqnarray}
\min_{\bm W} \frac{1}{n}\left\| \bm W\tilde{\bm Z} - \tilde{\bm Y}\right\|_F^2 &=& \frac{1}{n}\min_{\bm W} \left\| \tilde{\bm Z}^T \bm W ^T - \tilde{\bm Y}^T\right\|_F^2\nn\\
&=&  \min_{\bm W} \frac{1}{n}\left\| \tilde{\bm Z}^T \bm W ^T - P_{\tilde{Z}}\tilde{\bm Y}^T\right\|_F^2 + \frac{1}{n}\left\|P_{\tilde{\bm Z}^\perp}\tilde{\bm Y}^T\right\|_F^2\nn\\
&=& \frac{1}{n}\left\|P_{\tilde{\bm Z}^\perp}\tilde{\bm Y}^T\right\|_F^2 = \frac{1}{n}\left\|\tilde{\bm Y}\right\|_F^2- \frac{1}{n}\left\|P_{\tilde{\bm Z}}\tilde{\bm Y}^T\right\|_F^2,\nn
\end{eqnarray}
where $P_{\tilde{\bm Z}}$ denotes the orthogonal projector onto the column space of $\tilde{\bm Z}^T$ and a possible minimizer is $\bm W^T_{\text{opt}} = (\tilde{\bm Z}^T)^\dagger \tilde{\bm Y}^T$ or equivalently $\bm W_{\text{opt}} = \tilde{\bm Y} (\tilde{\bm Z})^\dagger$.
Since the MSE loss is a function of the range (column space) of $\tilde{\bm Z}^T$, we can consider only $\tilde{\bm Z}^T$ with orthonormal columns or equivalently $\frac{1}{n}\tilde{\bm Z}\tilde{\bm Z}^T=\bm I_r$. In this setting, it holds $P_{\tilde{\bm Z}}=\frac{1}{n}\tilde{\bm Z}^T\tilde{\bm Z}$.
Now, consider optimizing $\bm f(X)=\bm \Theta \left[k_X(\bm x_1, X), \cdots, k_X(\bm x_n, X)\right]^T$.
We have, $\tilde{\bm Z}= \bm \Theta \bm K_X \bm H$ where $\bm H$ is the centering matrix.
Let $\bm V = \bm L_x^T \bm \Theta^T$ and $\bm C =\frac{1}{n} \bm L_X^T \bm H \bm L_X $, then it follows that
\begin{eqnarray}
\min_{\bm \Theta \bm K_X \bm H \bm K_X \bm \Theta^T=n\bm I_r} \frac{1}{n} \left\{\left\|\tilde{\bm Y}\right\|_F^2- \left\|P_{\tilde{\bm Z}}\tilde{\bm Y}^T\right\|_F^2\right\} &=&\frac{1}{n}\left\|\tilde{\bm Y}\right\|_F^2 -  \max_{\bm \Theta \bm K_X \bm H \bm K_X \bm \Theta^T=n\bm I_r} \frac{1}{n}  \left\|P_{\tilde{\bm Z}}\tilde{\bm Y}^T\right\|_F^2\nn\\
&=& \frac{1}{n}\left\|\tilde{\bm Y}\right\|_F^2 - \max_{\bm V ^T \bm C \bm V = \bm I_r} \frac{1}{n^2}\text{Tr} \left[\tilde{\bm Y}\bm H \bm K_X \bm \Theta^T \bm \Theta \bm K_X \bm H\tilde{\bm Y}^T\right]\nn\\
&=& \frac{1}{n^2}\left\|\tilde{\bm Y}\right\|_F^2 - \max_{\bm V ^T \bm C \bm V = \bm I_r} \frac{1}{n^2}\text{Tr} \left[ \bm \Theta \bm K_X \bm H\tilde{\bm Y}^T \tilde{\bm Y} \bm H \bm K_X \bm \Theta^T\right]\nn\\
&=& \left\|\tilde{\bm Y}\right\|_F^2 - \max_{\bm V ^T \bm C \bm V = \bm I_r} \frac{1}{n^2}\text{Tr} \left[ \bm V^T
\bm L_X^T\tilde{\bm Y}^T \tilde{\bm Y}  \bm L_X \bm V\right]\nn\\
&=&\frac{1}{n}\left\|\tilde{\bm Y}\right\|_F^2 - \frac{1}{n^2}\sum_{j=1}^r \lambda_j,\nn
\end{eqnarray} 
where $\lambda_1,\cdots, \lambda_r$ are $r$ largest eigenvalues of the following generalized problem
\begin{eqnarray}
\bm B_0 \bm u = \lambda\, \bm C \bm u\nn
\end{eqnarray}
and $\bm B_0:=\bm L_X^T\tilde{\bm Y}^T \tilde{\bm Y}  \bm L_X$.
This resembles the eigenvalue problem in Section~\ref{sec:emp}, equation~\eqref{eq:eig-gen-proof} where $\lambda=0$, $\mathcal H_Y$ is a linear RKHS and $\gamma\rightarrow 0$.
\end{proof}

\section{Deficiency of Mean-Squared Error as A Measure of Dependence}\label{sec:app-mse}

\begin{theorem2}
Let $\mathcal H_{S}$ contain all Borel functions, $S$ be a $d_S$-dimensional RV, and $ L_S(\cdot, \cdot)$ be MSE loss. 
Then, 
\begin{eqnarray}\label{thm:related-work}
Z \in \arg\sup\left\{\inf_{g_S\in \mathcal H_S}\mathbb E_{X,S}\left[  L_S\left (g_S\left(Z\right), S \right)\right]\right\} \Leftrightarrow \mathbb E[S\,|\, Z] = \mathbb E[S].\nn
\end{eqnarray}
\end{theorem2}

\begin{proof}
Let $S_i$, $\left(g_S(Z)\right)_i$, and $\left(\mathbb E[S\,|\, Z]\,\right)_i$ denote the $i$-th entries of
$S$, $g_S(Z)$, and $\mathbb E[S\,|\, Z]$,  respectively. Then, it follows that
\begin{eqnarray}
\inf_{g_S\in \mathcal H_{S}}\mathbb E_{X,S}\left[ L_S\left (g_S\left(Z\right), S \right)\right] &=& \inf_{g_S\in \mathcal H_{S}}\sum_{i=1}^{d_S}\mathbb E_{X,S}\left[ \left( \left(g_S(Z)\right)_i - S_i\right)^2\right]\nn\\
&=&\sum_{i=1}^{d_S} \mathbb E_{X,S} \left[\left( \left(\mathbb E[S\,|\, Z]\,\right)_i - S_i\right)^2\right]\nn\\
&\leq & \sum_{i=1}^{d_S} \mathbb E_{S}\left[\left( \left(\mathbb E[S]\,\right)_i - S_i\right)^2 \right]= \sum_{i=1}^{d_S} \var[S_i]\nn,
\end{eqnarray}
where the second step is due to the optimality of conditional mean (i.e., Bayes estimation) for MSE~\citep{jacod2012probability} and the last step is because independence between $Z$ and $S$ leads to an upper bound on MSE. Therefore, 
if $Z \in \arg\sup\left\{\inf_{g_S\in \mathcal H_{S}}\mathbb E_{X,S}\left[  L_S\left (g_S\left(Z\right), S \right)\right]\right\}$, then $\mathbb E[S\,|\, Z] = \mathbb E[S]$. 
On the other hand, if $\mathbb E[S\,|\, Z] = \mathbb E[S]$, then it follows immediately that 
$Z \in \arg\sup\left\{\inf_{g_S\in \mathcal H_{S}}\mathbb E_{X,S}\left[  L_S\left (g_S\left(Z\right), S \right)\right]\right\}$.
\end{proof}

This theorem implies that an optimal adversary does not necessarily lead to a representation $Z$ that is statistically independent of $S$, but rather leads to $S$ being mean independent of the representation $Z$.





\end{appendices}
\end{document}